\definecolor{tablegray}{gray}{0.95}
\theoremstyle{plain}
\newtheorem{theorem}{Theorem}[section]
\newtheorem{proposition}[theorem]{Proposition}
\theoremstyle{definition}
\newtheorem{definition}[theorem]{Definition}
\newtheorem{assumption}[theorem]{Assumption}
\theoremstyle{remark}
\newcommand{\mbf}[1]{{\mathbf{#1}}}
\NewDocumentCommand{\expectation}{O{\big} m m}{%
  \mathbb{E}_{#2}{#1[ #3 #1]}%
}
\newcommand{\Zeta}{Z}
\newcommand{\inv}{^{-1}}
\renewcommand{\mathcal}{\mathscr}
\title{Reward-Aware Proto-Representations\\ in Reinforcement Learning}
\author{%
  Hon Tik Tse \quad Siddarth Chandrasekar \quad Marlos C. Machado* \\
  University of Alberta, Alberta Machine Intelligence Institute (Amii) \\
  *Canada CIFAR AI Chair\\
  \texttt{\{hontik, siddart2, machado\}@ualberta.ca}
}
\begin{document}

\maketitle

\begin{abstract}
  In recent years, the successor representation (SR) has attracted increasing attention in reinforcement learning (RL), and it has been used to address some of its key challenges, such as exploration, credit assignment, and generalization. The SR can be seen as representing the underlying credit assignment structure of the environment by implicitly encoding its induced transition dynamics. However, the SR is reward-agnostic. In this paper, we discuss a similar representation that also takes into account the reward dynamics of the problem. We study the default representation~(DR), a recently proposed representation with limited theoretical (and empirical) analysis. Here, we lay some of the theoretical foundation underlying the DR in the tabular case by (1) deriving dynamic programming and (2) temporal-difference methods to learn the DR, (3) characterizing the basis for the vector space of the DR, and (4) formally extending the DR to the function approximation case through default features. Empirically, we analyze the benefits of the DR in many of the settings in which the SR has been applied, including (1) reward shaping, (2) option discovery, (3) exploration, and (4) transfer learning. Our results show that, compared to the SR, the DR gives rise to qualitatively different, reward-aware behaviour and quantitatively better performance in several settings.
\end{abstract}

\section{Introduction}

Learning appropriate representations is a key challenge in reinforcement learning (RL). 
The successor representation (SR)~\citep{dayan1993improving}, which represents states as the expected discounted number of visits to successor states, is a particularly promising idea. It has been shown to be an effective distance metric for reward shaping~\citep{wulaplacian,wang2021towards}, a promising inductive bias for temporally-extended exploration~\citep{machado2019efficient,machado2023temporal}, and an effective representation for credit assignment~\citep{mahadevan2007proto, machado2018eigenoption}, generalization~\citep{le2022generalization}, and zero-shot RL~\citep{touati2021learning, TouatiRO23, BorsaBQMHMSS19, TirinzoniTFGKXL25, agarwal2025proto}.
Furthermore, recent evidence suggests that the SR is a good computational model for explaining decision-making~\citep{momennejad2017successor} and neural activity in the brain~\citep{stachenfeld2017hippocampus, stachenfeld2014design}. \looseness=-1

The underlying idea behind the SR is to incorporate the temporal aspect of the problem into the representation so that two states are considered similar if they lead to similar future outcomes. Similar ideas permeate other concepts such as proto-value functions~\citep{mahadevan2007proto, mahadevan2005proto} and slow feature analysis~\citep{sprekeler2011relation, wiskott2002slow}.
In this work, we use the term {\it proto-representations} to refer to such temporal representations that implicitly capture the environment dynamics, with the SR the most prominent of them.
However, the SR (and others~\citep{moskovitzfirst, moskovitz2023state, Blier2021learning, FarebrotherGALG23, WiltzerFGT0DBR24}) only captures the transition dynamics of the environment, and does not take rewards into account. 
In this paper, we demonstrate the benefits of learning a proto-representation that also takes rewards into account. \looseness=-1

One candidate for reward-aware proto-representations is the {\it default representation} (DR)~\citep{piray2021linear}. Piray and Daw, in a neuroscience study, applied the DR mainly to replanning tasks and explaining different cognitive phenomena, such as habits and cognitive control. 
Nevertheless, we claim that the DR is an interesting concept beyond neuroscience, due to its relationship with the SR and its reward-awareness. In this context, many aspects of the DR have yet to be explored.
There are no efficient, general algorithms for learning the DR incrementally and online with a linear cost, akin to temporal-difference (TD) learning~\citep{sutton1988learning}. 
Additionally, given the successful applications of the eigenvectors of the SR for reward shaping~\citep{wulaplacian, wang2021towards}, and temporally-extended exploration~\citep{machado2023temporal}, the eigenvectors of the DR have seen limited investigation.
Finally, the behavior of the DR for a reward function not constant over non-terminal states is underexplored. \looseness=-1

In this paper, we (1) derive methods for learning the DR using dynamic programming (DP) and temporal-difference (TD) learning~\citep{sutton1988learning}, and prove the convergence for the DP method, (2)~characterize the basis for the vector space of the DR, also describing the settings in which the DR and the SR have the same eigenvectors, and
(3) define default features, taking a first step towards using the DR with function approximation. \looseness=-1

Furthermore, we empirically investigate whether the DR 
provides benefits over the SR. While prior work~\citep{piray2021linear,bazarjani_piray_2025} has mainly focused on applying the DR to explain phenomena in neuroscience, here we consider applications more common in computational RL. 
We demonstrate that: (1) in environments with low-reward regions to be avoided, using the DR for reward shaping achieves superior performance over the SR, (2) using the DR for online eigenoption discovery~\citep{machado2023temporal}, the DR exhibits reward-aware exploratory behavior and obtains higher rewards than the SR over the course of exploration, (3) the DR, similar to the SR, can be used for count-based exploration, and (4) default features enable efficient transfer learning when the rewards at terminal states change.

\vspace{-0.1cm}
\section{Background}
\vspace{-0.1cm}
In this paper, we use lowercase symbols (e.g., $r, p$) to denote functions, calligraphic font (e.g, $\mathcal{S}, \mathcal{A}$) to denote sets, bold lowercase symbols (e.g., $\mbf r, \mbf e$) to denote vectors, and bold uppercase symbols (e.g., $\mbf \Psi, \mbf \Zeta$) to denote matrices. We index the $(i, j)$-th entry of a matrix $\mbf A$ by $\mbf A(i, j)$.

\vspace{-0.2cm}
\subsection{Reinforcement Learning and the Successor Representation}
\vspace{-0.1cm}
In the standard RL framework, the environment is formulated as an MDP $\langle\mathcal{S}, \mathcal{A}, r, p, \gamma\rangle$, where $\mathcal{S}$ is the state space, $\mathcal{A}$ is the action space, $r:\mathcal{S} \to \mathbb{R}$ (or $r:\mathcal{S} \times \mathcal{A} \to \mathbb{R}$) is the reward function, $p:\mathcal{S} \times \mathcal{A} \to \Delta (\mathcal{S} )$ is the transition function, and $\gamma \in [0, 1)$ is the discount factor. 
At every time step $t$, the agent observes a state, $S_t$, and selects an action, $A_t$. The environment then transitions to a next state, $S_{t + 1} \sim p(\cdot |S_t, A_t)$, and the agent receives a reward, $R_t = r(S_t)$ (or $R_t=r(S_t, A_t)$).
The agent's goal is to learn a policy $\pi:\mathcal{S} \to \Delta( \mathcal{A} )$ that maximizes the expected discounted return $\mathbb{E}_{\pi}[\sum_{t=0}^\infty \gamma^t R_t]$. \looseness=-1

In this framework, the successor representation~\citep[SR;][]{dayan1993improving} represents states as the expected discounted number of visits to their successor states. Given a policy $\pi$, the SR, $\mathbf{\Psi}^\pi \in \mathbb{R}^{|\mathcal{S}| \times |\mathcal{S}|}$, is defined as
\vspace{-0.1cm}
\begin{align}
    \mathbf{\Psi}^\pi(s, s') = \mathbb{E}_{\pi}\left[
    \sum_{t=0}^\infty \gamma^t \mathbbm{1}_{\{ S_t = s' \}} | S_0 = s
    \right], \label{eq:SR_definition}
\end{align}
where $\mathbbm{1}$ is the indicator function. Furthermore, denoting the transition probability matrix induced by $p$ and $\pi$ by $\mathbf{P}^{\pi}$, the SR can be computed in closed form by $\sum_{t=0}^\infty \gamma^t (\mbf P^\pi)^t = (\mathbf{I} - \gamma \mathbf{P}^{\pi})^{-1}$. {Note that in this work, we set the rows corresponding to terminal states to be all zeros for $\mathbf{P}^{\pi}$. Denoting the set of trajectories from $s$ to $s'$ by $\mathcal{T}_{s\to s'}$,} we can express the entries of the SR~\citep{Blier2021learning} as
\vspace{-0.1cm}
\begin{align}
    \mbf \Psi^\pi(s, s') = \sum_{{\tau \in \mathcal{T}_{s \to s'} }} \mbf P^\pi (\tau) \gamma^{\eta(\tau)}
    \label{eq:SR_expanded},
\end{align} 
where $\mbf P^\pi(\tau)$ is the probability of following $\tau$ under $\pi$, and $\eta(\tau)$ denotes the number of steps in $\tau$. 

Finally, the SR can be learned using temporal-difference learning~\citep{sutton1988learning, dayan1993improving}, for all $j \in \mathcal{S}$:
\begin{align}
    \mbf \Psi^\pi(S_t, j) \longleftarrow \mbf \Psi^\pi(S_t, j) + \alpha \big[
    \mathbbm{1}_{ \{S_t = j \}} + \gamma \mbf \Psi^\pi(S_{t + 1}, j) - \mbf \Psi^\pi(S_t, j)
    \big].
\end{align}

\vspace{-0.3cm}
\subsection{Linearly Solvable Markov Decision Processes}
\vspace{-0.1cm}
The {\it default representation} (DR) is defined in the framework of linearly solvable MDPs~\citep{todorov2006linearly, todorov2009efficient}, a simplification of MDPs in which the optimal value function can be expressed by a linear equation. 
In this work, building on the work by Piray and Daw \citep{piray2021linear}, we define linearly solvable MDPs as $(\mathcal{S}, \mathcal{A}, r, p, \pi_d)$, where $\mathcal{S}$ is the state space, $\mathcal{A}$ is the action space, $r:\mathcal{S} \to \mathbb{R}$ is the reward function, $p: \mathcal{S} \times \mathcal{A} \to {\Delta (\mathcal{S})} $ is the transition function, and $\pi_d$ is the {\it default policy} {that assigns non-zero probabilities to all state-action pairs.}
It is standard in linearly solvable MDPs to assume no discounting of rewards, i.e., $\gamma = 1$. 
{We also make the assumption that $r(s) < 0$ for all non-terminal $s$.}
In order to make MDPs linearly solvable, the agent is tasked not only with maximizing the rewards, but also with not deviating too much from the default policy. Deviating from the default policy incurs a cost to the agent, causing the agent to also receive a penalty $\lambda \operatorname{KL}(p^\pi (\cdot | S_t) \| p^{\pi_d}(\cdot | S_t))$ at every time step $t$, where $\operatorname{KL}(u\|v)$ denotes Kullback-Leibler (KL) divergence between $u$ and $v$, $p^\pi(s' | s)$ denotes the probability of transitioning to $s'$ from $s$ under policy $\pi$, and $\lambda > 0$ determines the relative importance of the deviation cost. \looseness=-1

The DR, introduced under this formulation, is a reward-aware proto-representation that encodes the expected rewards for visiting successor states. Let $\mathbf{r} \in \mathbb{R}^{|\mathcal{S}|}$ be the vector of rewards at all states, and $\mathbf{P}^{\pi_d}$ be the transition probability matrix induced by $\pi_d$ (i.e., $\mathbf{P}^{\pi_d}(s, s')$ is the probability of transition from $s$ to $s'$ under $\pi_d$), the DR, denoted by $\mathbf{\Zeta}$, can be computed in closed form by 
\begin{align}
    \mbf Z = \Bigl [\operatorname{diag} \bigl(\exp(\mathbf{-r / \lambda}) \bigr) - \mathbf{P}^{\pi_d} \Bigr]^{-1}, \label{eq:DR_definition}
\end{align}
where $\exp$ denotes element-wise exponentiation. 

Importantly, the DR can be used to retrieve the optimal value function with a set of linear equations \citep{piray2021linear}. Let $N, T$ be the set of indices of non-terminal and terminal states, and $\mathcal{S}_N$ and $\mathcal{S}_T$ denote the set of non-terminal and terminal states. We have \looseness=-1
\begin{align}
    \exp(\mbf v^*_N / \lambda) = \mbf \Zeta_{NN} \mbf P^{\pi_d}_{NT} \exp(\mbf r_T / \lambda),
    \label{eq:DR_optimal_v}
\end{align}
where $\mbf v^*_N$ is the vector of optimal values for non-terminal states, and $\mbf P^{\pi_d}_{NT} \in \mathbb{R}^{|\mathcal{S}_N| \times |\mathcal{S}_T|}$ denotes the matrix of transition probabilities. 
Note that after learning the DR, assuming access to the transition dynamics, it is possible to directly compute the optimal values for any new configurations of terminal state rewards, a fact that has been used in the past to perform transfer learning \citep{piray2021linear,bazarjani_piray_2025}.

\vspace{-0.2cm}
\section{Reward-Aware Proto-Representations}
\label{sec:reward-aware-proto-representations}
\vspace{-0.1cm}
In the previous section, we presented the DR in a much more limited way than when discussing the SR. This is not by accident; up to this point, the DR has not been that well developed. Little is written about the impact of environment rewards on the DR, how those impact the space the DR spans, and we still do not have a state-action formulation of the DR. We address these issues in this section.

\vspace{-0.1cm}
\subsection{The Default Representation as a Generalization of the Successor Representation} \label{sec:eigenvectors}
\vspace{-0.1cm}
\begin{wrapfigure}[22]{r}{0.35\textwidth}
    \centering
    \includegraphics[width=0.8\linewidth]{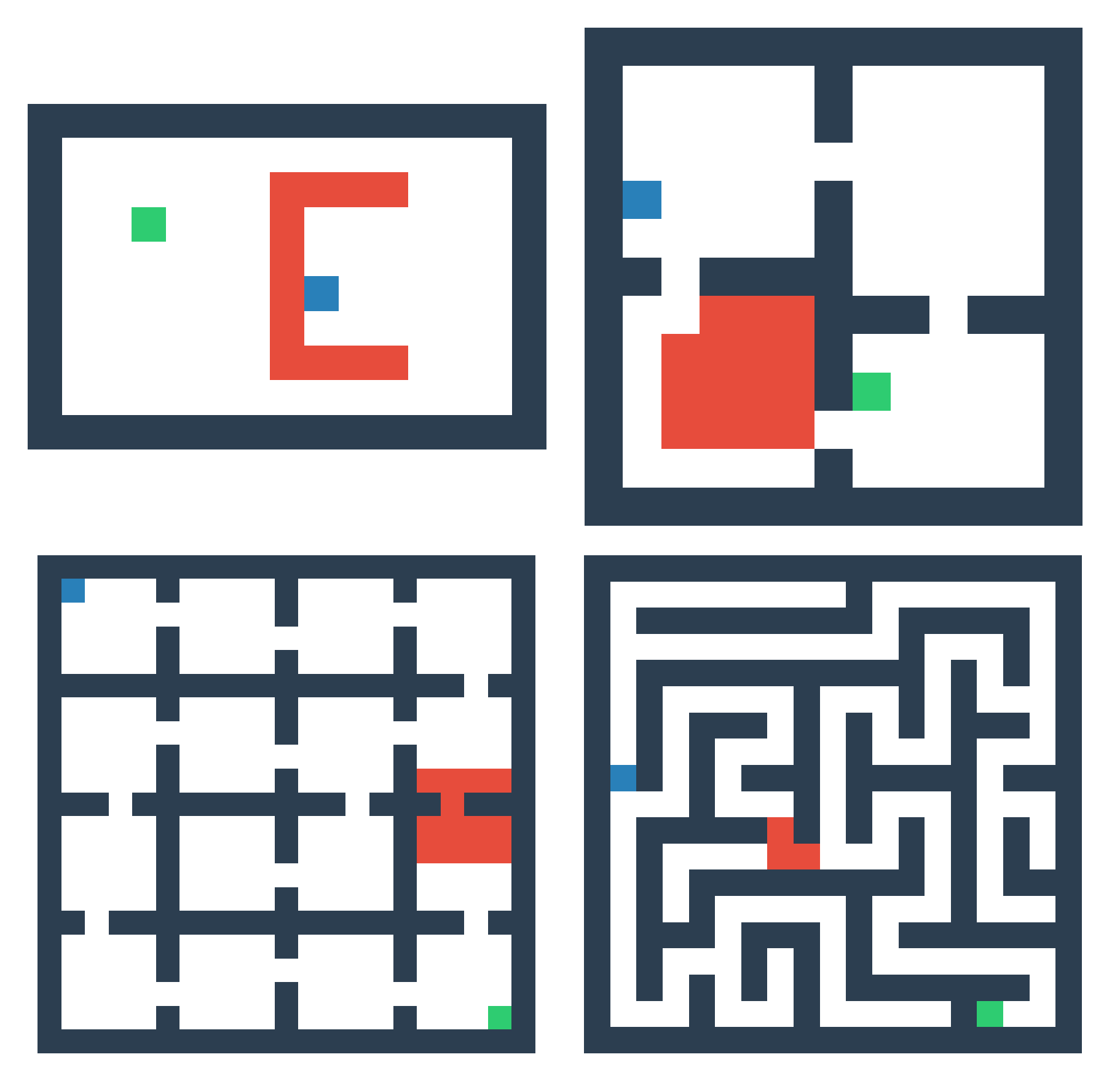}
    \caption{Episodic envs. adapted to incorporate negative rewards. Clockwise: 1)~\emph{grid task}~\citep{dayan1993improving}, 2)~{\it four rooms}~\citep{sutton1999between}, 3)~{\it grid room}~\citep{wang2021towards}, and 4)~{\it grid maze}~\citep{wang2021towards}. Start state is in blue. The agent receives $-1$ reward at every time step unless it steps on red tiles ($-20$ reward) or reaches the goal in green ($0$ reward).}
    \label{fig:environments}
\end{wrapfigure}
In problems in which the reward signal is the same across the whole state space, the DR spans the same space spanned by the SR. However, in settings in which the rewards received by the agent vary throughout an episode (see Figure~\ref{fig:environments}), the DR captures the expected rewards obtained when traveling between two states, instead of the number of transitions required to travel between states, as in the SR. \looseness=-1

We first formalize the idea that the DR and SR span the same space in Theorem~\ref{theorem:SR_DR_equiv_main} in settings in which the agent does not have access to a reward signal. Prior work had been limited to comparing the analytical form of the SR and DR~\citep{bazarjani_piray_2025}.

\begin{theorem}
    \label{theorem:SR_DR_equiv_main}
    Suppose both the SR and DR are computed with respect to the same policy, i.e., $\pi = \pi_d$. When the reward function is constant and negative, i.e., $r(s) = r(s') < 0 \ \forall s, s' \in \mathcal{S}$, the SR and DR share the same set of eigenvectors. Furthermore, when the SR and DR are symmetric, the $i$-th eigenvectors of the SR and DR are equivalent, and the $i$-th eigenvalues of the SR ($\mu_{\text{SR}, i}$) and DR ($\mu_{\text{DR}, i}$) are related as follows:
    \begin{align}
        \mu_{\text{SR}, i} = \bigg[\gamma \Big(\mu_{\text{DR}, i}^{-1} - \exp\big(-r(s)/\lambda \big) + \gamma^{-1} \Big)\bigg]^{-1},
    \end{align}
    {where $\gamma\in(0,1)$ is the discount factor of the SR, $r(s)$ is the state reward, and $\lambda$ is the relative importance of the deviation cost of the DR.}
\end{theorem}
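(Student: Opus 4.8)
The plan is to observe that, under the hypotheses $\pi=\pi_d$ and $r(\cdot)\equiv r<0$, both $\mathbf{\Psi}^\pi$ and $\mathbf{Z}$ are invertible rational functions of the single matrix $\mathbf{P}:=\mathbf{P}^\pi=\mathbf{P}^{\pi_d}$, and matrices that are rational functions of a common matrix necessarily share its eigenvectors. Concretely, writing $c:=\exp(-r/\lambda)>1$, constancy of the reward makes $\operatorname{diag}(\exp(-\mbf r/\lambda))=c\mathbf{I}$, so Eq.~\eqref{eq:DR_definition} becomes $\mathbf{Z}=(c\mathbf{I}-\mathbf{P})^{-1}$, while $\mathbf{\Psi}^\pi=(\mathbf{I}-\gamma\mathbf{P})^{-1}$. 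Both inverses exist: since $\mathbf{P}$ is nonnegative with row sums at most $1$ (the zeroed terminal rows summing to $0$), its spectral radius is at most $1$, whence $c\mathbf{I}-\mathbf{P}$ (as $c>1$) and $\mathbf{I}-\gamma\mathbf{P}$ (as $\gamma<1$) are nonsingular.

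Next I would set up the eigenvector/eigenvalue dictionary. Because $\mathbf{Z}$ is invertible, any eigenpair $(\mu_{\mathrm{DR}},\mathbf{u})$ of $\mathbf{Z}$ has $\mu_{\mathrm{DR}}\neq 0$, and $\mathbf{Z}\mathbf{u}=\mu_{\mathrm{DR}}\mathbf{u}$ is equivalent to $(c\mathbf{I}-\mathbf{P})\mathbf{u}=\mu_{\mathrm{DR}}^{-1}\mathbf{u}$, i.e.\ $\mathbf{P}\mathbf{u}=(c-\mu_{\mathrm{DR}}^{-1})\mathbf{u}$; likewise $\mathbf{\Psi}^\pi\mathbf{u}=\mu_{\mathrm{SR}}\mathbf{u}$ is equivalent to $\mathbf{P}\mathbf{u}=\gamma^{-1}(1-\mu_{\mathrm{SR}}^{-1})\mathbf{u}$. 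Hence $\mathbf{u}$ is an eigenvector of $\mathbf{Z}$ iff it is one of $\mathbf{P}$ iff it is one of $\mathbf{\Psi}^\pi$, which proves the two sets of eigenvectors coincide. Equating the two induced eigenvalues of $\mathbf{P}$ associated with a shared eigenvector $\mathbf{u}_i$, namely $c-\mu_{\mathrm{DR},i}^{-1}=\gamma^{-1}(1-\mu_{\mathrm{SR},i}^{-1})$, and solving for $\mu_{\mathrm{SR},i}$ gives $\mu_{\mathrm{SR},i}^{-1}=\gamma(\mu_{\mathrm{DR},i}^{-1}-c+\gamma^{-1})$, which is the claimed identity after substituting $c=\exp(-r(s)/\lambda)$.

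For the symmetric case I would use that a symmetric $\mathbf{\Psi}^\pi$ (equivalently, symmetric $\mathbf{Z}$, equivalently symmetric $\mathbf{P}$) admits an orthonormal eigenbasis, which by the dictionary above simultaneously diagonalizes $\mathbf{Z}$, $\mathbf{\Psi}^\pi$ and $\mathbf{P}$. It then only remains to check that the conventional indexing (by decreasing eigenvalue) matches up, i.e.\ that the reparametrization $\mu_{\mathrm{DR}}\mapsto \mu_{\mathrm{SR}}=[\,1+\gamma\mu_{\mathrm{DR}}^{-1}-\gamma c\,]^{-1}$ is strictly increasing on the relevant range: a symmetric $\mathbf{P}$ has real eigenvalues $\nu\in[-1,1]$, so $1-\gamma\nu>0$ and $c-\nu>0$, meaning every $\mu_{\mathrm{DR}}=1/(c-\nu)$ and $\mu_{\mathrm{SR}}=1/(1-\gamma\nu)$ is positive and the map has no pole, while its derivative $\gamma\mu_{\mathrm{DR}}^{-2}(1+\gamma\mu_{\mathrm{DR}}^{-1}-\gamma c)^{-2}$ is positive. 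Hence the $i$-th eigenvectors of the SR and DR agree and their eigenvalues satisfy the stated relation; if $\mathbf{P}$ has a repeated eigenvalue one simply fixes one eigenbasis of that common eigenspace and uses it for all three matrices.

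The calculations are short; the only places needing care are the invertibility/spectral-radius bookkeeping in the presence of the zeroed terminal rows, and—in the symmetric case—verifying that the DR-to-SR eigenvalue reparametrization is order-preserving so that ``the $i$-th eigenvector'' is unambiguous. I expect this monotonicity check to be the most delicate point, although as indicated it is entirely elementary.
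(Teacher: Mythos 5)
Your proposal is correct, and its core — rewriting the DR as $(c\mathbf{I}-\mathbf{P})^{-1}$ with $c=\exp(-r/\lambda)$, passing the eigenvalue equation through $\mathbf{P}$, and solving $c-\mu_{\mathrm{DR}}^{-1}=\gamma^{-1}(1-\mu_{\mathrm{SR}}^{-1})$ for the stated relation — is essentially the paper's argument. Where you genuinely diverge is the ordering step in the symmetric case. The paper works directly with the map $f(\mu_{\mathrm{DR}})=[\gamma(\mu_{\mathrm{DR}}^{-1}-c+\gamma^{-1})]^{-1}$: it computes $f'$, notes a possible vertical asymptote at $\mu_{\mathrm{DR}}=-1/(\gamma^{-1}-c)$, bounds the DR's spectrum inside $(0,\,1/(c-1)]$ via the Gershgorin circle theorem applied to $\mathbf{Z}^{-1}$, and then runs a case analysis on $\gamma$ versus $\exp(r/\lambda)$ to show the asymptote falls outside that interval. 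You instead parametrize everything by the eigenvalue $\nu\in[-1,1]$ of the symmetric substochastic $\mathbf{P}$ and observe that $\mu_{\mathrm{DR}}=1/(c-\nu)$ and $\mu_{\mathrm{SR}}=1/(1-\gamma\nu)$ are both finite, positive, and strictly increasing in $\nu$ on that interval, so the two orderings agree with no pole to worry about. The conclusions coincide (your bound $[1/(c+1),\,1/(c-1)]$ is in fact tighter than the paper's), but your route is shorter and avoids both Gershgorin and the case split; the paper's version has the minor advantage of phrasing the monotonicity intrinsically in terms of the DR's eigenvalues rather than the auxiliary spectrum of $\mathbf{P}$. Your closing remark about fixing a common eigenbasis for repeated eigenvalues is a sensible precaution the paper leaves implicit.
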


\emph{Proof sketch.} We write the DR as $[\tilde r \mbf I - \mbf P^\pi]^{-1}$, where $\tilde r = \exp(-r(s) / \lambda)$, and perform algebraic manipulations on the expression $\mathbf{Z}\mathbf{e} = \lambda \mathbf{e}$ to obtain the eigenvectors of $[\mbf I - \gamma \mbf P^\pi]^{-1}$. Then, we show that the eigenvalues of the SR and DR are related by a function that is monotonically increasing over the range of DR eigenvalues. The complete proof is available in Appendix~\ref{appendix:theorem3.1_proof}.

This result is particularly interesting because the eigenvectors of the SR have been shown to be useful in settings such as reward shaping and option discovery. {The top eigenvector of the SR captures temporal distance between states, and such distance has been used for reward shaping as it leads to better performance than distances that are based on the coordinates of states ~\citep{wang2021towards,wulaplacian}. In option discovery, it has been shown that eigenoptions, which are obtained by training a policy to maximize a reward signal induced by the eigenvectors of the SR, allow the agent to reach less explored areas of the state space, facilitating exploration~\citep{machado2023temporal}. We will revisit these applications in Section~\ref{sec:experiments}.}

\begin{figure}
    \centering
    \includegraphics[width=\linewidth]{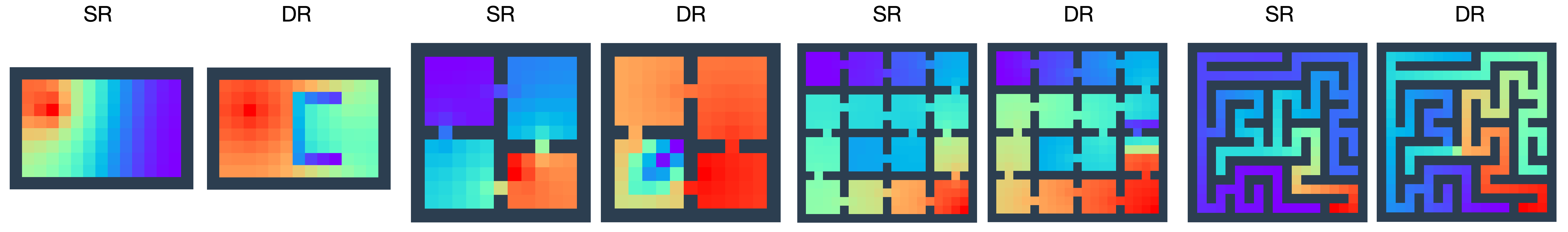}
    \vspace{-0.4cm}
    \caption{Top eigenvectors of the SR and the DR in the environments shown in Figure~\ref{fig:environments}. We report the logarithm of the DR for better visualization due to very different magnitudes.}
    \label{fig:eigenvectors}
\end{figure}

In problems in which the reward function is not constant, the DR is able to capture such information. While this is evident by the reward vector, $\mbf r$, in Equation~\ref{eq:DR_definition}, we can re-write the DR to match the closed-form of the SR~(Equation~\ref{eq:SR_expanded}) to make such a relationship clearer: 
\begin{align}
    \mbf \Zeta (s, s') = \sum_{{\tau \in \mathcal{T}_{s\to s'}}} \mbf P^{\pi_d}(\tau) \exp (r(\tau) / \lambda ),
    \label{eq:DR_expanded}
\end{align}
where $r(\tau)$ is the sum of rewards obtained along trajectory $\tau$ (see derivation in Appendix~\ref{appendix:lemma_derivation}). \looseness=-1

By writing Equation~\ref{eq:DR_expanded} similarly to the SR's, we see that both the SR and the DR compute a weighted average of a statistic over all trajectories from $s$ to $s'$, where the weights correspond to the probabilities of the trajectories under the policies w.r.t. which the SR and DR are computed. For the SR, the statistic captures the number of transitions required to travel between states, while for the DR, it captures the rewards obtained when travelling between states. \looseness=-1

Figure~\ref{fig:eigenvectors} illustrates the eigenvectors of the DR and SR in the environments depicted in Figure~\ref{fig:environments} to help the reader develop an intuition of the similarities and differences between these two proto-representations. While the top eigenvectors of the SR reflect the proximity of states in terms of transitions, the top eigenvectors of the DR not only capture the transition dynamics of the environment, but they also accurately reflect the positions of low-reward regions in the environment.

\subsection{State-Action-Dependent Rewards}
\label{section:state-action-dependent-rewards}
The form of the DR discussed so far is defined for reward functions that depend only on the state, and cannot be applied when the reward function depends on state-action pairs, which is prevalent in RL. 

To extend the DR to state-action pairs, we need to extend linearly solvable MDPs to state-action pairs, {which was first presented in the work by~\citet{ringstrom2020goal}. However, the DR was not discussed in this work, so we provide our derivation in Appendix~\ref{appendix:DR_state_action_pair}.} Interestingly, the resulting equation looks similar to Equation~\ref{eq:DR_definition}. Using the overbar for vectors and matrices in the state-action setting, e.g., $\bar {\mbf{ P}}$ instead of $\mbf P$, the DR for state-action-dependent rewards is: \looseness=-1
\begin{align}
    \bar{\mbf \Zeta} = \Big[\operatorname{diag}\big(\exp(-\bar{\mbf r} / \lambda)\big) - \bar{\mbf P}^{\pi_d} \Big]\inv,
    \label{eq:DR_definition_SA}
\end{align}
where and $\bar{\mbf r} \in \mathbb{R}^{|\mathcal{S}||\mathcal{A}|}$ is the vector of state-action rewards, and $\bar{\mbf P}^{\pi_d} \in \mathbb{R}^{|\mathcal{S}||\mathcal{A}| \times |\mathcal{S}||\mathcal{A}|}$ the matrix of transition probabilities between state-action pairs under $\pi_d$.

By learning the DR for the state-action space, the optimal Q-values can now be computed by
\begin{align}
\exp(\bar{\mbf q}_N / \lambda) = \bar{\mbf \Zeta}_{NN}  \bar{\mbf P}^{\pi_d}_{NT} \exp(\bar{\mbf r}_T/\lambda),
\end{align}
where $\bar{\mbf q} \in \mathbb{R}^{|\mathcal{S}||\mathcal{A}|}$ is the vector of optimal state-action values.
Furthermore, the optimal policy can be computed simply as
\begin{align}
    \pi^*(a|s) = \frac{\pi_d(a|s) \exp (q^*(s,a) / \lambda )}{\sum_{a'} \pi_d(a'|s) \exp (q^*(s,a') / \lambda )}.
    \label{eq:DR_optimal_policy}
\end{align}
As we show in Section~\ref{sec:default_features}, the DR under this formulation, combined with default features, unlike prior work, allows direct computation of optimal policies when terminal rewards change, enabling efficient transfer without assuming access to transition dynamics~\citep{piray2021linear,bazarjani_piray_2025}.

\section{General Algorithms for Estimating the Default Representation}
We now introduce efficient and general algorithms for learning the DR through dynamic programming and TD learning. Before the results we outline below, there were two update mechanisms for the DR: one based on the Woodbury matrix identity~\citep{piray2021linear}, which is limited to the tabular case; and a TD learning method recently introduced in a pre-print that is limited to settings in which the reward function is constant over non-terminal states~\citep{bazarjani_piray_2025}. We refer the reader back to the previous section to emphasize how different the DR is in more general settings, with rewards that vary across states. \looseness=-1

\vspace{-0.1cm}
\subsection{Dynamic Programming}
We first present a method to learn the DR by dynamic programming (DP), and we prove it converges. The update rule for the DP algorithm and the convergence result are presented in the theorem below.
\begin{theorem}
    \label{theorem:dp_convergence_main}
    Let $\mathcal{S}_N$ denote the set of non-terminal states. Assume $r(s) < 0 \ \forall s \in \mathcal{S}_N$. Let $\mbf R = \operatorname{diag} \big(\exp(-\mbf r / \lambda) \big)$, where $\mbf r$ is the vector of all state rewards. Let $\mbf \Zeta_0 = \mbf R^{-1}$. The update rule
    \begin{align}
        \mbf \Zeta_{k + 1} = \mbf R^{-1} + \mbf R^{-1} \mbf P^{\pi_d} \mbf \Zeta_k 
        \label{eq:DR_DP_update_2}
    \end{align}
    converges to the DR, that is, $\lim_{k \to \infty} \mbf \Zeta_{k} = \mbf \Zeta$.
\end{theorem}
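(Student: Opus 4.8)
\emph{Proof plan.} The plan is to unroll the iteration into a Neumann series (a matrix geometric series) in $\mbf A := \mbf R^{-1}\mbf P^{\pi_d}$, show that $\rho(\mbf A) < 1$, and then identify the limit of the series with the closed form of the DR in Equation~\ref{eq:DR_definition}. First I would prove by induction on $k$ that $\mbf \Zeta_k = \big(\sum_{t=0}^{k}\mbf A^t\big)\mbf R^{-1}$: the base case is the definition $\mbf \Zeta_0 = \mbf R^{-1}$, and the inductive step follows by substituting the hypothesis into Equation~\ref{eq:DR_DP_update_2} and shifting the summation index. Note $\mbf R$ is invertible, being diagonal with strictly positive entries $\exp(-r(s)/\lambda)$, so $\mbf A$ and every $\mbf \Zeta_k$ are well-defined.

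The crux is bounding $\rho(\mbf A)$. Since $\mbf R^{-1} = \operatorname{diag}(\exp(\mbf r/\lambda))$, the $(s,s')$ entry of $\mbf A$ equals $\exp(r(s)/\lambda)\,\mbf P^{\pi_d}(s,s') \ge 0$, so $\mbf A$ is nonnegative, its $\infty$-norm is simply its largest row sum, and $\rho(\mbf A) \le \|\mbf A\|_\infty$. For a non-terminal state $s$ the row sum is $\exp(r(s)/\lambda)\sum_{s'}\mbf P^{\pi_d}(s,s') = \exp(r(s)/\lambda)$, since $\pi_d$ induces a genuine distribution over next states, and $\exp(r(s)/\lambda) < 1$ because $r(s) < 0$; for a terminal state $s$, the corresponding row of $\mbf P^{\pi_d}$ is zero by convention, so its row sum is $0$. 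Hence $\|\mbf A\|_\infty = \max_{s \in \mathcal{S}_N}\exp(r(s)/\lambda) =: c < 1$, the maximum being over the finite set $\mathcal{S}_N$, and therefore $\rho(\mbf A) \le c < 1$.

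With $\rho(\mbf A) < 1$, the partial sums $\sum_{t=0}^{k}\mbf A^t$ converge to $(\mbf I - \mbf A)^{-1}$ — in fact geometrically, since $\|\mbf A^t\|_\infty \le c^t$ — so $\mbf \Zeta_k \to (\mbf I - \mbf A)^{-1}\mbf R^{-1}$. I would then rewrite this limit using $\mbf R(\mbf I - \mbf A) = \mbf R - \mbf P^{\pi_d}$, which gives $(\mbf I - \mbf A)^{-1}\mbf R^{-1} = \big(\mbf R(\mbf I - \mbf A)\big)^{-1} = (\mbf R - \mbf P^{\pi_d})^{-1} = \mbf \Zeta$, matching Equation~\ref{eq:DR_definition}; this simultaneously confirms that $\mbf R - \mbf P^{\pi_d}$ is invertible, so the DR is well-defined. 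The main obstacle is the spectral-radius estimate: one must use nonnegativity of $\mbf A$ so that its $\infty$-norm is just the largest row sum, and handle the terminal states carefully — without the convention that their rows of $\mbf P^{\pi_d}$ vanish, a zero-reward self-looping terminal state would make $\mbf R - \mbf P^{\pi_d}$ singular and the iteration would diverge.
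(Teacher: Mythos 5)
Your proposal is correct and follows essentially the same route as the paper's proof: unroll the iteration into the Neumann series $\big(\sum_{t=0}^{k}(\mbf R^{-1}\mbf P^{\pi_d})^t\big)\mbf R^{-1}$, establish convergence via the bound $\|\mbf R^{-1}\mbf P^{\pi_d}\|_\infty = \max_{s\in\mathcal{S}_N}\exp(r(s)/\lambda) < 1$ (using that terminal rows of $\mbf P^{\pi_d}$ are zero), and identify the limit $(\mbf I - \mbf R^{-1}\mbf P^{\pi_d})^{-1}\mbf R^{-1}$ with $(\mbf R - \mbf P^{\pi_d})^{-1} = \mbf \Zeta$. Your additions (the explicit induction, the spectral-radius framing, and the remark that this also certifies invertibility of $\mbf R - \mbf P^{\pi_d}$) are sound refinements of the same argument.
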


\emph{Proof sketch.} We recursively expand the RHS of Equation~\ref{eq:DR_DP_update_2} and look at its closed-form in the limit, which we show is a Neumann series. We conclude the proof by leveraging the convergence of the Neumann series and plugging it back into the derived limit. The complete proof is in Appendix~\ref{appendix:DP_convergence_proof}. \looseness=-1

\vspace{-0.1cm}
\subsection{Temporal Difference Learning}
To learn the DR by TD learning, we build on the DP method and start with Equation~\ref{eq:DR_DP_update_2}. For any state $s$, for all $j$, $\mbf \Zeta(s, j)$ can be learned using DP as
\begin{align}
    \mbf Z(s, j) = \exp \big(r(s) / \lambda \big) \Big( \mathbbm{1}_{\{s = j\}} +  
    \mathbb{E}_{s' \sim p^{\pi_d}(s' | s)} \big[{\mbf \Zeta (s', j)} \big] \Big),
\end{align}
which requires access to the transition probabilities under the default policy. 

In TD learning, we do not assume access to these probabilities, but we can approximate the above equation using samples, estimating the expectation over next states $s'$ with a sampled next state.
Then, given a transition $(s, a, r, s')$ sampled under the default policy, we can update $\mbf \Zeta(s, j)$ as
\begin{align}
    \mbf Z(s, j) \longleftarrow \mbf \Zeta(s, j) + \alpha \Big [ Y
    - \mbf \Zeta(s, j) \Big], \label{eq:DR_TD}
\end{align}
where $\alpha$ is the step size and $Y$ the bootstrapping target. For non-terminal states, 
$Y = \exp(r / \lambda) \big (\mathbbm{1}_{\{ s = j \}} + \mbf Z(s', j) \big)$, while for terminal states, Y = $\exp(r / \lambda) \mathbbm{1}_{\{s = j\}}$. Naturally, we can leverage importance sampling~\citep{precup2000eligibility} if the default policy is different from the behaviour policy. \looseness=-1

\vspace{-0.1cm}
\section{Extending The DR Beyond Classical Tabular RL}~\label{sec:extensions}

\vspace{-0.6cm}

We have been using the tabular case to formalize the ideas we are introducing; now we extend them beyond this setting. Given Equation~\ref{eq:DR_TD}, it is trivial to formalize the parameterization of the DR such that $\mbf Z(s, s'; \boldsymbol{\theta}) \approx \mbf Z(s, s')$, for some parameters $\boldsymbol{\theta}$. In this section, we go beyond that, defining the concept of \emph{default features}, which is more semantically meaningful when talking about the DR in terms of features. {We also define a proto-representation for the maximum entropy RL framework. However, since this paper focuses on the DR, we defer the discussion of this proto-representation to Appendix~\ref{appendix:mer}.} \looseness=-1

\vspace{-0.1cm}
\label{sec:default_features}

Similar to successor features~\citep[SFs;][]{barreto2017successor}, we derive here a decomposition of the value function into features and weights such that these features capture the dynamics of the environment. Different from SFs, default features also consider non-terminal rewards as part of the environment dynamics. \looseness=-1

Recall we can use the DR to obtain the optimal value function with a set of linear equations (Eq.~\ref{eq:DR_optimal_v}):
\begin{align}
    \exp(\mbf v^*_N / \lambda) = \mbf \Zeta_{NN} \mbf P^{\pi_d}_{NT} \exp(\mbf r_T / \lambda).
\end{align}
Inspired by the decomposition performed with SFs, we define default features (DFs) by decomposing the terminal rewards in this equation.
We assume that the exponential of the terminal state rewards can be computed as $\exp \big(r(s) / \lambda \big) = \boldsymbol{\phi}(s)^\top \mbf w$, where $\boldsymbol{\phi}(s), \mbf w \in \mathbb{R}^d$ are the features of a terminal state, $s$, and the weights of the reward function, respectively. Note that this decomposition is not restrictive, since we can fully recover any terminal state rewards if $\boldsymbol\phi(s) = \exp \big(r(s) / \lambda \big)$.

Under this decomposition, the above equation can be written as
\begin{align}
    \exp(\mbf v^*_N / \lambda) = \mbf \Zeta_{NN} \mbf P^{\pi_d}_{NT} \exp(\mbf r_T / \lambda) = \mbf \Zeta_{NN} \mbf P^{\pi_d}_{NT} \mbf \Phi \mbf w,
\end{align}
where $\mbf \Phi \in \mathbb{R}^{|S_T| \times d}$ is the matrix of terminal state features.
We define the product $\mbf \Zeta_{NN} \mbf P^{\pi_d}_{NT} \mbf \Phi$ as the DFs matrix, and each row $\boldsymbol \zeta(s) =  (\mbf \Zeta_{NN} \mbf P^{\pi_d}_{NT} \mbf \Phi)_{s,:}$ to be the DFs of a non-terminal state.
Then, the optimal value for each non-terminal state can be computed as $\boldsymbol \zeta(s)^\top\mbf w$. {Note that such a form of optimal value computation is only possible when only the rewards at terminal states change. The reason is that, when only the terminal state rewards change, $\zeta(s)$ stays the same for all non-terminal $s$. However, when the rewards for non-terminal states change, $\zeta(s)$ changes since $\zeta(s)=\mathbf{Z}_{NN}\mathbf{P}^{\pi_d}_{NT} \mathbf{\Phi}$ and $\mathbf{Z}_{NN}$ depends on non-terminal state rewards.}

Building on Theorem~\ref{theorem:dp_convergence_main}, we can learn the DFs using TD learning from 
a transition $(s, a, r, s')$ sampled under the default policy:
\begin{align}
    \boldsymbol\zeta(s) \longleftarrow \boldsymbol\zeta(s) + \alpha \big ( \exp(r / \lambda) \boldsymbol\zeta(s') - \boldsymbol\zeta(s) \big),
\end{align}
where $\alpha$ is the step size, and we define $\boldsymbol\zeta(s) \coloneqq \boldsymbol\phi(s)$ for terminal states, $s$. DFs can be thought of as features propagating from the terminal states to all non-terminal states in a manner that respects the reward and transition dynamics. The definition of DFs can be easily extended to the case of state-action-dependent rewards, following the formulation in Section~\ref{section:state-action-dependent-rewards}.

Apart from enabling function approximation, another benefit of learning the DFs over the DR is that it does not require access to the transition dynamics, $\mbf P^{\pi_d}_{NT}$, to retrieve optimal values. This is especially useful when learning the DFs for state-action-dependent rewards, since we can directly retrieve the optimal policy using the corresponding state-action values (see Equation~\ref{eq:DR_optimal_policy}).
After learning the DFs in this setting, even without access to environment dynamics, we can efficiently compute the optimal policy under any terminal state reward configuration, enabling transfer learning, as shown in Section~\ref{sec:experiments}. {Note that while DFs enable directly computing the optimal policy, they are limited to the scenario where only the terminal state rewards change. On the other hand, while SFs can only compute a policy as good as the ones used to compute the SFs, they can be applied when any state rewards change, allowing more flexibility than DFs.}
\looseness=-1

\vspace{-0.2cm}

\vspace{-0.1cm}
\section{Experiments}~\label{sec:experiments}
\vspace{-0.6cm}

Proto-representations, like the SR, have been used in RL for reward shaping~\citep{wulaplacian, wang2021towards}, option discovery~\citep{machado2017laplacian, machado2023temporal}, count-based exploration~\citep{machado2020count}, and transfer~\citep{barreto2017successor}, among others. We now revisit these settings to assess the impact of using \emph{reward-aware} representations. \looseness=-1

\vspace{-0.2cm}

\subsection{Reward Shaping}

Drawing inspiration from results with the SR~\citep{wulaplacian, wang2021towards}, we first evaluate the effectiveness of the DR as a distance metric. In reward shaping experiments, our goal is to assess whether the DR can serve as a useful shaping signal, capturing the geometry of the state space, and whether it not only guides the agent toward the goal state but also helps it avoid negative rewards.

Assuming $\mbf e$ is the top eigenvector of the SR, prior work~\citep{wulaplacian, wang2021towards} has used the shaping reward $\hat r_t = - \big(\mbf e(s_\text{goal}) - \mbf e(s_{t+1}) \big)^2$, where $\mbf e(s)$ denotes the entry for state $s$. Here, we instead use the top eigenvector $\mbf e$ of the DR for potential-based reward shaping~\citep{ng1999policy}, defining the shaping reward as $\hat r_t = \gamma \mbf e(s_{t+1}) - \mbf e(s_t)$.
We compare our approach (\texttt{DR-pot}) with three baselines: 1)~potential-based shaping with the SR's top eigenvector (\texttt{SR-pot}), 2)~the distance-based SR method from~\citep{wang2021towards} (\texttt{SR-pri}), and 3)~no shaping (\texttt{ns}). {Both the SR and DR are computed with respect to the uniform random policy.} \looseness=-1

We carry out the experiments in the environments shown in Figure~\ref{fig:environments}. Importantly, the shortest path from the start state to the goal state passes through low-reward regions and is sub-optimal, so the agent needs to learn to avoid low-reward regions in the environment to achieve the optimal return. 
For the reward shaping approaches, we train a Q-learning~\citep{watkins1989learning} agent using a convex combination of the original environment reward, $r_t$, and the shaping reward, $\hat r_t$, resulting in the expression $ (1 - \beta) r_t + \beta \hat r_t$, where $\beta \in [0, 1]$ is a hyperparameter. {Note that we assume access to the eigenvectors of the SR and DR prior to training the agent with the potential-based reward. Future work can explore learning the eigenvectors and the policy simultaneously.} For the no shaping baseline, we simply train the Q-learning agent using the original environment reward. We use $\gamma = 0.99$, $\epsilon = 0.05$ for $\epsilon$-greedy exploration, $\lambda=1.3$ for the DR, and perform a grid search over the Q-learning's step size ($[0.1,0.3,1.0]$) and $\beta$ ($[0.25, 0.5, 0.75, 1.0]$). We run 20 seeds for each hyperparameter setting, and after identifying the best hyperparameters, re-run 50 seeds to avoid maximization bias.

\begin{figure}
    \centering
    \includegraphics[width=\linewidth]{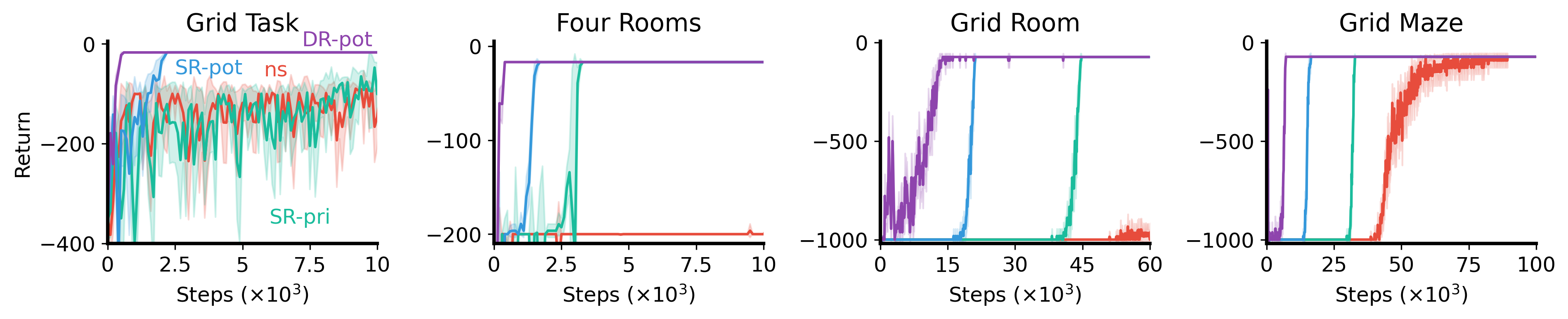}
    \vspace{-0.4cm}
    \caption{The avg. undiscounted return over 50 runs for potential-based reward shaping using the DR (\texttt{DR-pot}), the SR (\texttt{SR-pot}), the prior approach using the SR (\texttt{SR-prior})~\citep{wang2021towards}, and no shaping (\texttt{ns}) in the environments shown in Figure~\ref{fig:environments}. The shaded area indicates 95\% confidence interval. \looseness=-1}
    \label{fig:reward_shaping}
\end{figure}

Figure~\ref{fig:reward_shaping} shows the undiscounted returns of our approach and the baselines in the environments shown in Figure~\ref{fig:environments}. First, we observe that compared to prior methods, potential-based reward shaping is a better way of utilizing the eigenvectors of the proto-representations. 
Second, we observe that \texttt{DR-pot} performs significantly better than any of the baselines, including the SR-based approaches. This is because, as shown in Figure~\ref{fig:eigenvectors}, the top eigenvector of the DR captures the location of low-reward regions, and is capable of guiding the agent along the optimal path that does not cross them, whereas the top eigenvector of the SR fails to do so, and simply leads the agent along the shortest sub-optimal path to the goal. As per our result in Section~\ref{sec:eigenvectors}, the performance using the SR and the DR is very similar 
in the absence of low-reward regions
(see Appendix~\ref{appendix:reward_shaping}). \looseness=-1

\vspace{-0.2cm}
\subsection{Option Discovery}
\vspace{-0.1cm}

The options framework~\citep{precup2000temporal, sutton1999between} allows the agent to interact with the environment in a temporally-extended manner. Prior work~\citep{machado2023temporal} demonstrated that the SR can be used for discovering options. In particular, their iterative online eigenoption discovery method using the SR, called {\it covering eigenoptions (CEO)}, greatly reduces the average number of steps required to visit every state, and enables exploring the state space more uniformly. Furthermore, CEO combined with deep learning has shown promising results and outperformed state-of-the-art baselines in a wide variety of settings~\citep{klissarov2023deep}. Here, we consider using the DR for iterative online eigenoption discovery. \looseness=-1

Our approach, {\it reward-aware covering eigenoptions (RACE)}, is similar to CEO but learns the DR instead of the SR, using its top eigenvector to define the intrinsic reward. 
{In RACE, at every iteration, the agent collects samples to learn the DR, the eigenvector of which is then used to compute an eigenoption. The eigenoption will be used in the next RACE iteration to collect samples. This forms a cycle of using eigenoptions to improve the learned representation, and using the learned representation to then refine eigenoptions.}
The full algorithm and hyperparameters are in Appendix~\ref{appendix:eigenoption_discovery}. We compare RACE with CEO and a uniform random walk in the environments from Figure~\ref{fig:environments}, using 100-step episodes with no terminal states. To evaluate exploration, we measure the percentage of states visited; to assess risk awareness, we track the average reward per time step. Our goal is to test whether RACE enables reward-aware exploration. The results are depicted in Figure~\ref{fig:eigoption_discovery_scatter}, where each point represents the average performance of one hyperparameter setting over 10 seeds. 

\begin{figure}
    \centering
    \includegraphics[width=\linewidth]{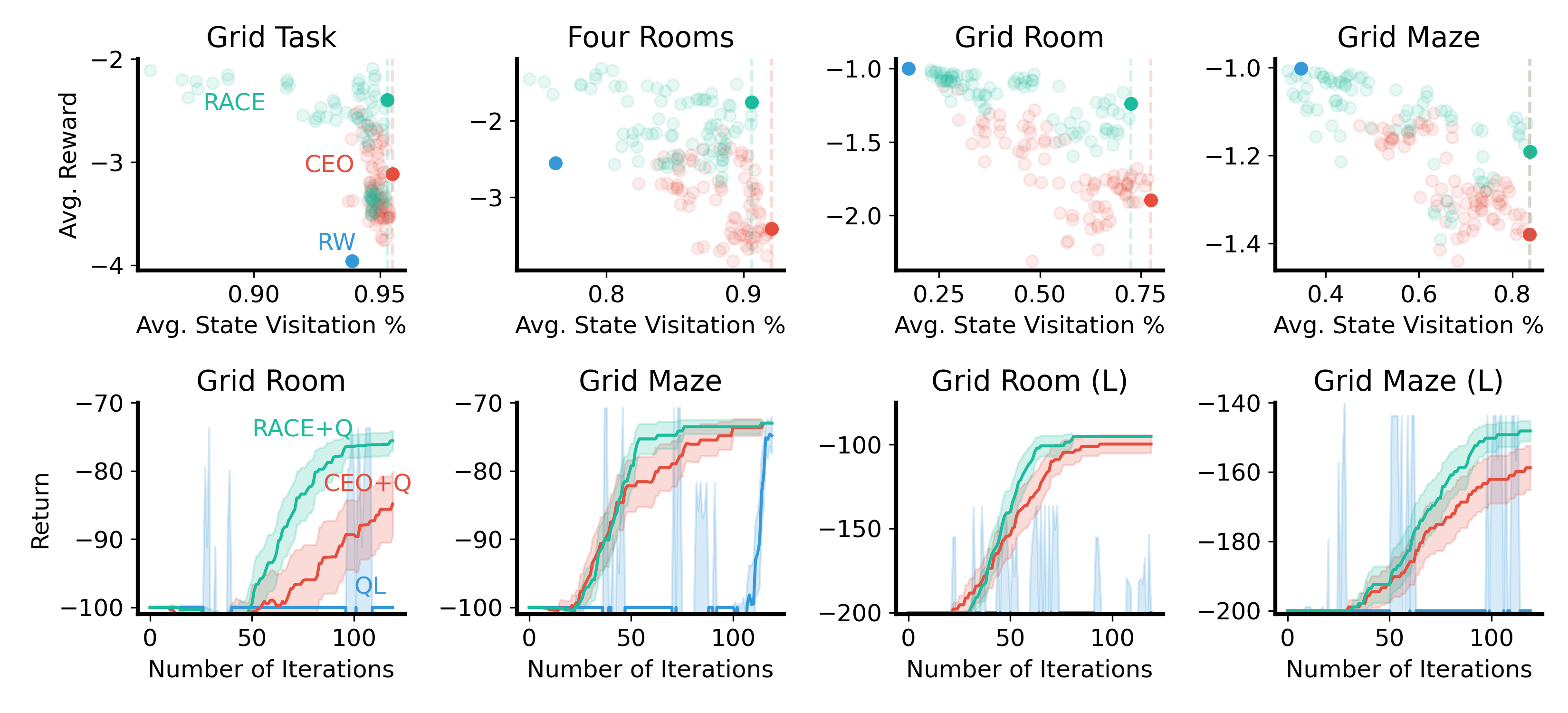}

    \vspace{-0.3cm}
    \caption{\emph{Top row}: Average reward vs. state visitation percentage for various hyperparameter settings of iterative online eigenoption discovery via CEO, RACE, and random walk (RW) in the environments from Figure~\ref{fig:environments}. For reference, solid dots mark settings with the highest visitation.
    \emph{Bottom row}: Undiscounted return of RACE+Q, CEO+Q, and QL (baseline), averaged over 50 seeds. Rightmost environments are shown in Figure~\ref{fig:env_larger}. Shaded areas indicate 95\% confidence intervals.}
    \label{fig:eigoption_discovery_scatter}
\end{figure}

RACE is better than CEO at avoiding low-reward regions. Interestingly, because of its reward-awareness, RACE has a slightly lower average state visitation percentage than CEO. The agent sometimes needs to take detours to visit states that are obstructed by low-reward regions, thus requiring a larger number of steps to visit other states (see Figure~\ref{fig:eigenoption_discovery_cumulative_visit}, in Appendix~\ref{appendix:eigenoption_discovery}). 
Again, a consequence of our Theorem~\ref{theorem:SR_DR_equiv_main} is that CEO and RACE behave very similarly in environments with constant rewards across the whole state space (see Figure~\ref{fig:eigenoption_discovery_without_lava_curve}, in Appendix~\ref{appendix:eigenoption_discovery}). Finally, RACE's exploratory behaviour also leads to faster learning when accumulating rewards. Results combining iterative online eigenoption discovery with Q-learning~\citep{watkins1989learning} are shown in Figure~\ref{fig:eigoption_discovery_scatter}. Appendix~\ref{appendix:eigenoption_discovery} provides details about these experiments. Interestingly, RACE+Q performs much better than CEO+Q because CEO does not distinguish between high-reward and low-reward paths, and can reach the goal state following a sub-optimal one, causing offline Q-learning to first learn the sub-optimal path.

\vspace{-0.2cm}
\subsection{Count-Based Exploration}
\vspace{-0.1cm}

\begin{wraptable}{r}{0.6\textwidth} 
\vspace{-0.6cm}
\centering
\caption{Model-free count-based exploration. Values in thousands ($\times 10^3$). 95\% conf. intervals shown in parentheses.}
\vspace{0.3cm}
\label{table:count_based_results}
\small
\rowcolors{2}{white}{tablegray}
\begin{tabular}{@{}lccc@{}}
\toprule
\textbf{Environment} & \textbf{Sarsa} & \textbf{+SR} & \textbf{+DR} \\
\midrule
\textsc{RiverSwim} & 25 (0.8) & 1,206 (566) & 2,964 (252) \\
\textsc{SixArms}   & 265 (157) & 1,066 (2,708) & 3,518 (4,571) \\
\bottomrule
\end{tabular}
\end{wraptable}

Machado et al.~\citep{machado2020count} have shown that the norm of the SR, while it is being learned, encodes state-visitation counts. While reward-aware representations are somewhat at odds with pure exploration, given their risk-averse nature, we can still ask whether the norm of the DR can be used as a density model for pseudocounts~\citep{bellemare2016unifying}. Here, we evaluate the use of the $\ell_2$ norm of the DR as an exploration bonus. Specifically, the intrinsic reward is defined as $r_{\text{intr}}(s, a) = \beta \cdot \log( \| \bar{\mbf \Zeta}_{sa,:} \|_2)$,
where $\beta$ is a scaling factor. \looseness=-1

We follow Machado et al.'s~\citep{machado2020count} experimental setup, and we consider two traditional exploration problems: Riverswim and SixArms.  In Table~\ref{table:count_based_results}, we report the performance (total undiscounted return) of Sarsa~\citep{rummery1994online} with $\epsilon$-greedy exploration, Sarsa + SR~\citep{machado2020count}, and Sarsa + DR. Details about hyperameters are available in Appendix~\ref{appendix:count-based-exploration}. All results are averaged over 100 independent runs.

Our results suggest that the norm of the DR can indeed work as a density function since its use has led to performance orders of magnitude better than random exploration. Maybe surprisingly, Sarsa+DR even outperforms, with statistical significance, Sarsa+SR in RiverSwim. This is an interesting result, as, to learn the optimal policy, the agent needs to overcome its aversion to negative rewards, which it is encoding in its representation; but maybe this is precisely why it is somewhat more effective, as such a representation allows it to learn faster. This interplay is an interesting topic of future work. \looseness=-1

\subsection{Transfer Learning}

The DR has been applied in the transfer learning setting to compute the optimal policy when the rewards at terminal states change~\citep{bazarjani_piray_2025,piray2021linear}. However, such computations require access to the transition dynamics of the environment. With our extension of the DR to state-action-dependent rewards and default features, it is now possible to compute the optimal policy without requiring access to the transition dynamics. 
We compare our approach with successor features~\citep[SFs;][]{barreto2017successor}, which allow efficient transfer when the rewards of any states change. Note, however, that in this work we only consider the setting when only the rewards at terminal states change, since this is the setting that DFs can be applied to. The comparison of DFs and SFs under this setting does not serve as evidence that SFs is not effective for the setting it is designed for. \looseness=-1

\begin{wrapfigure}{r}{0.5\textwidth}
    \centering
    \includegraphics[width=0.48\textwidth]{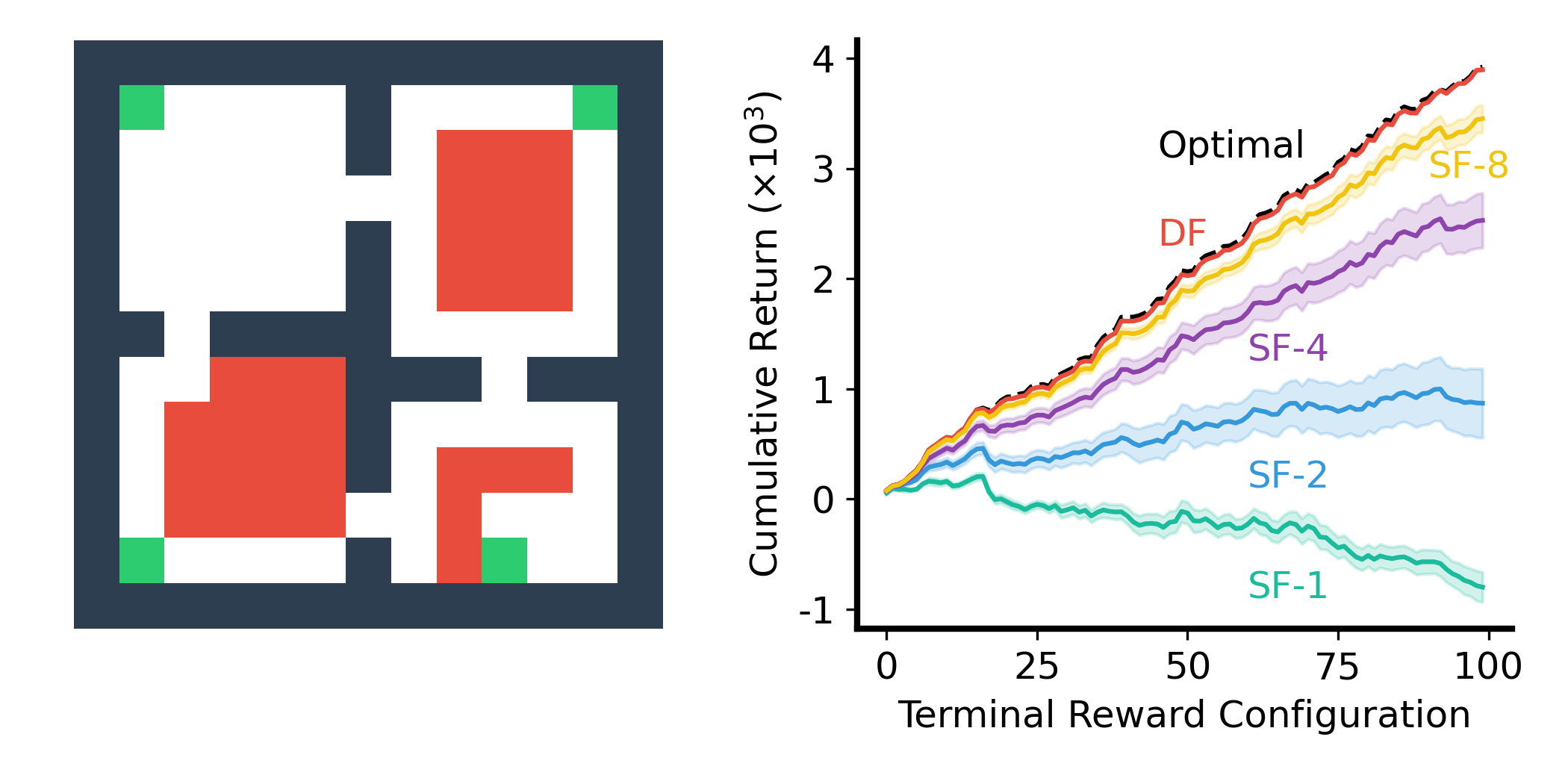}
    \caption{Left: Four rooms with multiple goals. Right: Cumulative return across new terminal state reward configurations
    . Curves are averaged over 50 runs. The shaded area shows 95\% conf. interval. \looseness=-1}
    \label{fig:transfer}
\end{wrapfigure}

We perform experiments in the environment shown in Figure~\ref{fig:transfer}, consisting of four goal states. We describe our procedure for learning and using the SFs and DFs for transfer in Appendix~\ref{appendix:transfer}, as well as the features $\boldsymbol{\phi}$ used. Figure~\ref{fig:transfer} shows the performance of the DFs, the SFs computed with respect to different numbers of policies, and the optimal performance obtained by training an optimal policy under each new terminal state reward configuration using Q-learning~\citep{watkins1989learning}. As previously mentioned in Section~\ref{sec:default_features}, in this setting, DFs is extremely effective because it directly computes the optimal policy for new terminal state reward configurations, while SFs only computes a policy that is at least as good as the policies used to compute the SFs. Note, however, that DFs computes an optimal policy in the linearly solvable MDP setting that balances the reward function and the cost of deviating from the default policy. \looseness=-1

\vspace{-0.2cm}
\section{Related Work}
\vspace{-0.1cm}

Throughout this paper, we have discussed the relevant related work for both the theoretical development and the empirical evaluation of our work. A line of work we did not discuss that might appear relevant for learning representations that capture both reward and transition dynamics is based on bisimulation metrics~\citep{ferns2004metrics, castro2020scalable}, which quantify behavioural similarity between pairs of states by comparing their immediate rewards and distributions over the following states. 
These works enforce the bisimulation metric structure on the representation space by encouraging the difference between state representations to approximate the bisimulation metric~\citep{zhang2021learning,castro2021mico, zang2022simsr, kemertas2021towards}. However, such approaches learn a distance-preserving mapping, where behaviorally similar states are mapped to nearby representations. This is in contrast to the DR or the SR, which encodes the environment dynamics directly in a semantically meaningful manner (see Eq.~\ref{eq:DR_expanded} and~\ref{eq:SR_expanded}).
As an immediate result, the DR supports planning (see Eq.~\ref{eq:DR_optimal_v}) as well as other use cases discussed here, such as exploration and option discovery, in a manner not possible with bisimulation-metric-based representations. \looseness=-1

\vspace{-0.2cm}
\section{Conclusion}
\vspace{-0.1cm}

In this work, we advance the theoretical foundation of the default representation (DR), a reward-aware proto-representation. Our contributions include: (1) deriving dynamic programming and TD methods to learn the DR, (2) analyzing its eigenspectrum, and (3) extending it to function approximation via default features. Empirically, we show that the DR, by incorporating reward structure, yields qualitatively different behavior than the SR across tasks such as reward shaping, option discovery, exploration, and transfer.

{While the DR provides benefits in environments where being reward-aware is important, there are some limitations. First, naively computing the DR and its eigendecomposition can lead to numerical instabilities due to taking the exponential of negative rewards, as discussed in Appendix~\ref{appendix:numerical_considerations}. 
This could be problematic, especially in more complicated environments with long horizons. 
Second, as we show in Figures~\ref{fig:reward_shaping_no_low_reward} and~\ref{fig:eigenoption_discovery_without_lava_curve} in Appendix~\ref{appendix:exp-detail}, in environments in which all non-terminal state rewards are the same, the DR will perform as well as the SR. Given the numerical issues associated with the DR, the SR may be a better choice in these environments. Third, while the DR captures the environment dynamics more fully, this comes at the expense of flexibility. While the SR, which is reward-agnostic, can allow transfer learning when any state rewards change, the DR, which is reward-aware, can only perform transfer in settings where only the terminal state rewards change.
}

Finally, this work lays a foundation for the DR as a potential middle ground between reward-agnostic and risk-sensitive approaches.
While we focused on the tabular case for clarity and to be able to evaluate the DR across a range of use cases where reward-agnostic proto-representations have been applied, restricting the scope to this setting remains a limitation of this work. Future work should explore the use of the DR in more complex settings, starting with adapting neural network methods developed for approximating the SR~\citep[e.g.,][]{kulkarni2016deep,gomez2024proper,chua2024learning}. \looseness=-1

\begin{ack}
We thank Brett Daley for useful discussions and feedback. We thank anonymous reviewers whose feedback improved this work.
This research was supported in part by the Natural Sciences and Engineering Research Council of Canada (NSERC), the Canada CIFAR AI Chair Program, and Alberta Innovates. It was also enabled in part by computational resources provided by the Digital Research Alliance of Canada.
\end{ack}

\medskip


\bibliography{bib}

\medskip

\clearpage

\newpage
\appendix

\section{Broader Impact}
\label{appendix:broader_impact}
This work investigates a novel form of representation in reinforcement learning---a foundational area within the field of artificial intelligence. Given the fundamental and theoretical nature of this research, it does not pose any foreseeable negative societal impacts. As with most projects focused on core scientific inquiry in AI, the associated risks are minimal, and no significant mitigation strategies are required. \looseness=-1

\section{Theory}
\label{appendix:theory}
In this section, we present the detailed derivation of the theoretical results in the main text.

\subsection{Derivation of Equations~\ref{eq:SR_expanded} and~\ref{eq:DR_expanded}}
\label{appendix:lemma_derivation}
We present the full derivation of Equations~\ref{eq:SR_expanded} and~\ref{eq:DR_expanded}.

\subsubsection{{Derivation of Equation~\ref{eq:DR_expanded}}}
Equation~\ref{eq:DR_expanded} states that the $(s_i, s_j)$-th entry of the DR can be written as:
\begin{align}
    \mbf \Zeta (s_i, s_j) = \sum_{\tau \in \mathcal{T}_{s_i\to s_j}} \mbf P^{\pi_d}(\tau) \exp\big(r(\tau) / \lambda\big).
\end{align}

\begin{proof}
Recall that the analytical form of the DR~\citep{piray2021linear} is $\mbf \Zeta =\Big[\operatorname{diag} \big(\exp(\mathbf{-r / \lambda})\big) - \mathbf{P}^{\pi_d}\Big]^{-1}$. To simplify notation, let $\mathbf{R} = \operatorname{diag} \big(\exp(-\mathbf{r}/\lambda) \big)$. 
We then have
\begin{align}
    \mbf \Zeta  &= (\mbf R - \mbf P^{\pi_d})\inv \\
    (\mbf R - \mbf P^{\pi_d}) \mbf Z &= \mbf I \\
    \mbf R \mbf Z - \mbf P^{\pi_d} \mbf Z &= \mbf I \\
    \mbf R \mbf Z &= \mbf I + \mbf P^{\pi_d} \mbf \Zeta \\
    \mbf Z &= \mbf R\inv + \mbf R\inv \mbf P^{\pi_d} \mbf \Zeta.
\end{align}

From the above equation, we know that
\begin{align}
    \mathbf{\Zeta}(s_i, s_j) = \exp \big(r(s_i)/\lambda \big) \mathbbm{1}_{\{ s_i=s_j \}} + \sum_{s'} \mathbf{P}^{\pi_d}(s_i, s') \exp \big(r(s_i)/\lambda \big)  \mathbf{\Zeta}(s', s_j),
\end{align}
the right-hand side of which can be repeatedly expanded to give
\begin{align}
    \mbf \Zeta(s_i, s_j) =& \exp \big(r(s_i) / \lambda \big) \mathbbm{1}_{ \{s_i = s_j\}} \nonumber \\
    &+ \sum_{s'} \mbf P^{\pi_d}(s_i, s') \exp \Big ( \big(r(s_i) + r(s') \big) / \lambda \Big) \mathbbm{1}_{ \{s' = s_j\}} \nonumber \\
    &+ \sum_{s'} \mbf P^{\pi_d}(s_i, s') \sum_{s''} \mbf P^{\pi_d}(s', s'') \exp \Big ( \big(r(s_i) + r(s') + r(s'') \big) / \lambda \Big) \mathbbm{1}_{ \{s'' = s_j\}} \nonumber \\
    &+ \dots.
\end{align}

In the above equation, it can be seen that the expression enumerates over trajectories from state $s_i$ to state $s_j$ with different lengths. The first row corresponds to a single trajectory with length 1 (containing one state), which is the simple case when the agent starts and ends in $s_j$ without taking an action. The second row enumerates all trajectories from $s_i$ to $s_j$ with length 2, and so on. 

The above equation can be written in the following more compact form:
\begin{align}
    \mbf \Zeta(s_i, s_j) = \sum_{\tau \in \mathcal{T}_{s_i\to s_j}} \mbf P^{\pi_d}(\tau) \exp \big (r(\tau) / \lambda \big), 
\end{align}
where $\mathcal{T}_{s_i \to s_j}$ is the set of trajectories from state $s_i$ to state $s_j$, $\mbf P^{\pi_d}(\tau)$ is the probability of following trajectory $\tau$ under the policy $\pi_d$, and $r(\tau)$ is the sum of rewards obtained in trajectory $\tau$. \looseness=-1
\end{proof}

\subsubsection{{Derivation of Equation~\ref{eq:SR_expanded}}}
{Equation~\ref{eq:SR_expanded} appeared in the work by~\citet{Blier2021learning}, but the derivation was not shown. For completeness, we provide the derivation here, which is similar to that for Equation~\ref{eq:DR_expanded}.}
Equation~\ref{eq:SR_expanded} states that the $(s_i, s_j)$-th entry of the SR can be written as: \looseness=-1
\begin{align}
    \mbf \Psi^\pi(s_i, s_j) = \sum_{\tau \in \mathcal{T}_{s_i\to s_j}} \mbf P^\pi (\tau) \gamma^{\eta(\tau)}.
\end{align}

\begin{proof}
Recall that the analytical form of the SR~\citep{dayan1993improving} is $\mbf \Psi^\pi = (\mbf I - \gamma \mbf P^\pi)\inv$. 
We know that
\begin{align}
    \mbf \Psi^\pi &= (\mbf I - \gamma \mbf P^\pi)\inv \\
     (\mbf I - \gamma \mbf P^\pi) \mbf \Psi^\pi &= \mbf I \\
    \mbf \Psi^\pi &= \mbf I + \gamma \mbf P^\pi \mbf \Psi^\pi,
\end{align}
from which we know that 
\begin{align}
    \mbf \Psi^\pi(s_i, s_j) = \mathbbm{1}_{\{ s_i = s_j\}} + \sum_{s'} \mbf P^\pi(s_i, s') \gamma \mbf \Psi^\pi(s', s_j).
\end{align}
Expanding the right-hand side, we have
\begin{align}
    \mbf \Psi^\pi(s_i, s_j) =& \mathbbm{1}_{\{ s_i = s_j\}} \\
    &+ \sum_{s'} \mbf P^\pi(s_i, s') \gamma \mathbbm{1}_{\{ s' = s_j\}} \\
    &+ \sum_{s'} \mbf P^\pi(s_i, s') \sum_{s''} \mbf P^\pi(s', s'') \gamma^2 \mathbbm{1}_{\{ s'' = s_j\}} \\
    &+ \dots,
\end{align}
which can be written as
\begin{align}
    \mbf \Psi^\pi(s_i, s_j) = \sum_{\tau \in \mathcal{T}_{s_i \to s_j}} \mbf P^\pi(\tau) \gamma^{\eta(\tau)}, \label{eq:SR_expected_view}
\end{align}
where $\mathcal{T}_{s_i \to s_j}$ is the set of trajectories from state $s_i$ to state $s_j$, $\mbf P^{\pi}(\tau)$ is the probability of following trajectory $\tau$ under the policy $\pi$, and $\eta(\tau)$ is the number of steps it takes to visit $s_j$ from $s_i$ in trajectory $\tau$. Note that when $\tau = (s_i) = (s_j)$, $\eta(\tau) = 0$.
\end{proof}

\subsection{{Theorem 3.1}}
\label{appendix:theorem3.1_proof}
\renewcommand{\thetheorem}{3.1}
\begin{theorem}
    Suppose both the SR and DR are computed with respect to the same policy, i.e., $\pi = \pi_d$. When the reward function is constant and negative, i.e., $r(s) = r(s') < 0 \ \forall s, s' \in \mathcal{S}$, the SR and DR share the same set of eigenvectors. Furthermore, when the SR and DR are symmetric, the $i$-th eigenvectors of the SR and DR are equivalent, and the $i$-th eigenvalues of the SR ($\mu_{\text{SR}, i}$) and DR ($\mu_{\text{DR}, i}$) are related as follows:
    \begin{align}
        \mu_{\text{SR}, i} = \bigg[\gamma \Big(\mu_{\text{DR}, i}^{-1} - \exp\big(-r(s)/\lambda \big) + \gamma^{-1} \Big)\bigg]^{-1},
    \end{align}
    where $\gamma\in(0,1)$ is the discount factor of the SR, $r(s)$ is the state reward, and $\lambda$ is the relative importance of the deviation cost of the DR.
\end{theorem}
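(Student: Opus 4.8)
The plan is to start from the closed-form expressions for the two representations under the shared policy $\pi = \pi_d$ and the constant reward assumption. Writing $\tilde{r} = \exp(-r(s)/\lambda)$, which is a single positive scalar since the reward is constant, the DR becomes $\mbf{Z} = [\tilde{r}\mbf{I} - \mbf{P}^\pi]^{-1}$ and the SR is $\mbf{\Psi}^\pi = [\mbf{I} - \gamma\mbf{P}^\pi]^{-1}$. The key observation is that both are rational functions of the single matrix $\mbf{P}^\pi$, so they are simultaneously diagonalizable and share eigenvectors with $\mbf{P}^\pi$ itself.

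First I would take an eigenvector $\mbf{e}$ of $\mbf{Z}$ with eigenvalue $\mu_{\text{DR}}$, i.e. $\mbf{Z}\mbf{e} = \mu_{\text{DR}}\mbf{e}$, and apply $[\tilde{r}\mbf{I} - \mbf{P}^\pi]$ to both sides to get $\mbf{e} = \mu_{\text{DR}}(\tilde{r}\mbf{I} - \mbf{P}^\pi)\mbf{e}$, which rearranges to $\mbf{P}^\pi \mbf{e} = (\tilde{r} - \mu_{\text{DR}}^{-1})\mbf{e}$. So $\mbf{e}$ is an eigenvector of $\mbf{P}^\pi$ with eigenvalue $\beta := \tilde{r} - \mu_{\text{DR}}^{-1}$. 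Next I would plug this into the SR: $\mbf{\Psi}^\pi \mbf{e} = [\mbf{I} - \gamma\mbf{P}^\pi]^{-1}\mbf{e} = (1 - \gamma\beta)^{-1}\mbf{e}$, so $\mbf{e}$ is an eigenvector of the SR with eigenvalue $\mu_{\text{SR}} = (1 - \gamma\beta)^{-1} = [1 - \gamma(\tilde{r} - \mu_{\text{DR}}^{-1})]^{-1}$. Factoring $\gamma$ out, $\mu_{\text{SR}} = [\gamma(\gamma^{-1} - \tilde{r} + \mu_{\text{DR}}^{-1})]^{-1}$, which is exactly the claimed relation once $\tilde{r}$ is expanded back to $\exp(-r(s)/\lambda)$. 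This establishes the shared eigenvectors and the eigenvalue formula in one pass; the symmetric case is where we can additionally assert the eigenvectors are "equivalent" (i.e. can be chosen identically and real-orthonormal), since symmetric matrices are orthogonally diagonalizable.

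The part that needs care — and what I expect to be the main obstacle — is not the algebra but the well-definedness and the ordering claim. I need to check that $1 - \gamma\beta \neq 0$ so that $\mu_{\text{SR}}$ is finite; this should follow because $\mbf{P}^\pi$ is substochastic with terminal rows zeroed, so its eigenvalues satisfy $|\beta| \le 1$ and $\gamma < 1$ gives $1 - \gamma\beta \neq 0$ (one should handle the episodic/substochastic structure cleanly here, perhaps noting $\mbf{I} - \gamma\mbf{P}^\pi$ is invertible by the Neumann series argument already used for the SR). More importantly, the theorem claims the $i$-th eigenvector of the SR corresponds to the $i$-th eigenvector of the DR under a consistent ordering of eigenvalues; to justify this I would show that the map $\mu_{\text{DR}} \mapsto \mu_{\text{SR}}$ given by $g(x) = [\gamma(\gamma^{-1} - \tilde{r} + x^{-1})]^{-1}$ is monotonic over the relevant range of DR eigenvalues, so that sorting by DR eigenvalue and sorting by SR eigenvalue produce the same ordering of the common eigenvectors. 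Computing $g'$ and checking its sign on the interval where the DR eigenvalues live (which one can pin down from $\beta = \tilde{r} - \mu_{\text{DR}}^{-1} \in [-1,1]$, equivalently $\mu_{\text{DR}}^{-1} \in [\tilde{r}-1, \tilde{r}+1]$, and $\tilde{r} > 1$ since $r(s) < 0$) is the one genuinely fiddly computation; I would present it as: $g$ is a composition of $x \mapsto x^{-1}$ (decreasing) with an affine map (constant sign) with another $x \mapsto x^{-1}$, and track the two sign flips to conclude monotonicity, hence the index-matching claim.
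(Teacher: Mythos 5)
Your proposal is correct and follows essentially the same route as the paper: the same algebraic manipulation showing that any eigenvector of $[\tilde r\mbf I-\mbf P^\pi]^{-1}$ is an eigenvector of $[\mbf I-\gamma\mbf P^\pi]^{-1}$ with the stated eigenvalue relation, followed by a monotonicity argument for the eigenvalue map over the range of DR eigenvalues to justify the index matching. The only (cosmetic) difference is that you pin down that range via the spectrum of the substochastic $\mbf P^\pi$ (giving $\mu_{\text{DR}}^{-1}\in[\tilde r-1,\tilde r+1]$ with $\tilde r>1$) and verify monotonicity by composing monotone maps, whereas the paper reaches the same bounds via Gershgorin discs applied to $\mbf Z^{-1}$ and locates the vertical asymptote of the eigenvalue map explicitly.
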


\begin{proof}
    Let $\mu_i, \mbf e_i$ be the $i$-th eigenvalue and eigenvector of the DR. Assuming that the reward function is constant, and letting $\tilde r = \exp \big (-r(s) / \lambda \big)$, the DR can be written as $[\tilde r \mbf I - \mbf P^\pi]^{-1}$. Then, we have:
    \begin{align}
        [\tilde r \mbf I - \mbf P^{\pi}]^{-1} \mbf e_i &= \mu_i \mbf e_i \\
        [\tilde r \mbf I - \mbf P^{\pi}] \mbf e_i &= \mu_i^{-1} \mbf e_i \\
        - \gamma \mbf P^{\pi} \mbf e_i &= \gamma (\mu_i^{-1} - \tilde{r}) \mbf e_i \\
        (\mbf I - \gamma \mbf P^{\pi}) \mbf e_i &= \gamma (\mu_i^{-1} - \tilde{r} + \gamma ^{-1}) \mbf e_i \\
        (\mbf I - \gamma \mbf P^{\pi})^{-1} \mbf e_i &= \big[\gamma (\mu_i^{-1} - \tilde{r} + \gamma ^{-1})\big]^{-1} \mbf e_i \\
        \mbf \Psi^\pi \mbf e_i &= \bigg[\gamma \Big(\mu_i\inv - \exp\big(-r(s) / \lambda\big) + \gamma\inv \Big)\bigg]\inv \mbf e_i \label{eq:42}
    \end{align}

    We have thus shown that the SR and DR share the same set of eigenvectors. When the SR and DR are symmetric, their eigenvalues are real, and we can further show that the orders of the eigenvectors of the SR and DR by their corresponding eigenvalues are identical.

    From Eq.~\ref{eq:42}, we know that the eigenvalues of the SR and DR are related by the following function:
    \begin{align}
        \mu_{SR} = f(\mu_{DR}) = [\gamma (\mu_{DR}^{-1} - \exp(-r(s) / \lambda) + \gamma \inv)]\inv
    \end{align}

    Taking the derivative, we have
    \begin{align}
        f'(\mu_{DR}) = {\gamma\inv [1 + \mu_{DR} (\gamma \inv - \exp(-r(s) / \lambda)) ]^{-2}}.
    \end{align}
    When $\gamma = \exp(r(s) / \lambda)$, $f'(\mu_{DR})$ is always positive, so the orders of the eigenvectors of the SR and DR are identical. When $\gamma \neq \exp(r(s) / \lambda)$, $f'(\mu_{DR}) > 0$ except at the vertical asymptote at $\mu_{DR} = \frac{-1}{\gamma\inv - \exp(-r(s) / \lambda)}$. We now show that all eigenvalues of the DR lie on the same side of the vertical asymptote, and so the orders of eigenvectors are still identical. We do so by first deriving lower and upper bounds on the eigenvalues of the DR.

    First, we show that all eigenvalues of the DR are positive. We start with the inverse of the DR, $\mathbf{Z}\inv= \operatorname{diag}(\exp(-\mathbf{r} / \lambda)) - \mathbf{P}^{\pi_d}$. Consider a row $i$ of $\mathbf{Z}\inv$ corresponding to a non-terminal state. We know that $\mathbf{Z}\inv(i, i) = \exp(-r(s) / \lambda) - \mathbf{P}^{\pi_d}(i, i) > 1 - \mathbf{P}^{\pi_d}(i, i) = \sum_{j\neq i}|\mathbf{Z}\inv(i, j)|$. For a row $i$ corresponding to a terminal state, $\mathbf{Z}\inv(i, i) > \sum_{j\neq i}|\mathbf{Z}\inv(i, j)|$ easily holds. Then, by the Gershgorin circle theorem, we know that this matrix has all positive eigenvalues, and thus, all eigenvalues of the DR are positive. 

    Next, we derive an upper bound on the eigenvalues of the DR. Also applying the Gershgorin circle theorem on the inverse of the DR, we know that the set of eigenvalues of the DR lies within the union of the Gershgorin discs. For the $i$-th row of the inverse of the DR, if it corresponds to a non-terminal state, we have a disc centered at $\exp(-r(s) / \lambda) - \mathbf{P}^{\pi_d}(i, i)$ with radius $1 - \mathbf{P}^{\pi_d}(i, i)$. For a terminal state, the disc is centered at $\exp(-r(s) / \lambda)$ with radius 0. It is then not hard to see that the minimum of the union of the discs is simply $\exp(-r(s) / \lambda) - 1$. Since the lower bound on the eigenvalues of the inverse of the DR is $\exp(-r(s) / \lambda) - 1$, the upper bound on the eigenvalues of the DR is $\frac{1}{\exp(-r(s) / \lambda) - 1}$. 

    We now know that all eigenvalues of the DR lie in the range $\left(0, \frac{1}{\exp(-r(s) / \lambda) - 1} \right]$. We now show that the vertical asymptote of $f$, if it exists, always lies outside of this range. We focus on the following two cases: 1) $\gamma < \exp(r(s) / \lambda)$, and 2) $\gamma > \exp(r(s) / \lambda)$. 

    First, when $\gamma < \exp(r(s) / \lambda)$, it can be easily shown that the vertical asymptote $\frac{-1}{\gamma\inv - \exp(-r(s) / \lambda)} < 0$. Since all eigenvalues are positive, they all lie on the right hand side of the vertical asymptote.

    For the remaining case, we first have that
    \begin{align}
        \gamma &< 1 \\
        \gamma \inv  &> 1 \\
        \gamma\inv - \exp(-r(s) / \lambda) &> 1 - \exp(-r(s) / \lambda).
    \end{align}
    Since $\gamma > \exp(r(s) / \lambda)$ and we assume $r(s) < 0$, both sides of the inequality are negative. Then,
    \begin{align}
        \frac{1}{\gamma \inv - \exp(-r(s) / \lambda)} &< \frac{1}{1 - \exp(-r(s) / \lambda)} \\
        \frac{-1}{\gamma\inv - \exp(-r(s) / \lambda)} &> \frac{1}{\exp(-r(s) / \lambda) - 1}.
    \end{align}
    Since all eigenvalues are less than or equal to $\frac{1}{\exp(-r(s) / \lambda) - 1}$, they all lie on the left hand side of the vertical asymptote. We have thus shown that when a vertical asymptote exists, it lies outside of the range of eigenvalues, and so the orders of the eigenvectors of the SR and DR are still identical. 
\end{proof}

\subsection{Extension of the DR to State-Action-Dependent Reward}
\label{appendix:DR_state_action_pair}
In this subsection, we derive the DR for the case when the reward function depends on both the state and action. {The extension of linearly solvable MDPs to the state-action case was first presented by ~\citet{ringstrom2020goal}. However, since (1) the DR was not defined in this work, and (2) there are minor differences in the formulation, derivation, and notation, we present the full derivation below nonetheless. Note, also, that the derivation bears similarity to relative entropy policy search~\citep{peters2010relative}, which constrains the KL divergence between an observed data distribution and that generated by the policy.} \looseness=-1

For this setting, instead of formulating the deviation cost as $\operatorname{KL}\big(p^\pi(\cdot | S_t) \| p^{\pi_d}(\cdot | S_t) \big)$, we formulate the deviation cost from the default policy as $\operatorname{KL} \big(p^\pi(\cdot, \cdot | S_t, A_t) \| p^{\pi_d} (\cdot,\cdot | S_t, A_t) \big)$, where $p^\pi(\cdot, \cdot | S_t, A_t)$ denotes the distribution over the next state-action pairs given the current state-action pair $(S_t, A_t)$ and the policy $\pi$. Then, at every time step $t$, the agent receives a reward of $\tilde r(S_t, A_t) = r(S_t, A_t) -\lambda \operatorname{KL}\big(p^\pi(\cdot, \cdot | S_t, A_t) \| p^{\pi_d} (\cdot,\cdot | S_t, A_t)\big)$, where $\lambda > 0$ determines the relative importance of the deviation cost.

Let $T$ be the random variable denoting the final time step of an episode. The return starting from time $t$ is then 
\begin{align}
    G_t = \tilde r(S_t, A_t) + \dots + \tilde r(S_{T-1},A_{T-1}) + r(S_T, A_T),
\end{align}
where we allow the agent to select an action $A_t$ at the terminal state $S_t$ to obtain a final reward $r(S_T, A_T)$. \looseness=-1

The state-action value function is defined as $q^\pi(s, a) = \expectation{\pi}{G_t | S_t=s, A_t = a}$. For terminal states, $q^\pi(s, a)$ is simply $r(s, a)$. For non-terminal states, we have
\begin{align}
    q^\pi(s, a) &= \expectation{\pi}{G_t | S_t = s, A_t = a} \\
    &= \expectation[\big]{\pi}{\tilde r(S_t, A_t) + \dots + \tilde r(S_{T-1},A_{T-1}) + r(S_T, A_T) | S_t=s, A_t=a} \\
    &= \tilde r(s, a) + \expectation[\big]{\pi}{\tilde r(S_{t+1}, A_{t+1}) + \dots + \tilde r(S_{T-1},A_{T-1}) + r(S_T, A_T) | S_t=s, A_t = a} \\
    &= \tilde r(s, a) + \sum_{s', a'} p^\pi(s', a' | s, a) \mathbb{E}_\pi \big[\tilde r(S_{t+1}, A_{t+1}) + \dots + \\
    & \quad \quad \tilde r(S_{T-1},A_{T-1}) + r(S_T, A_T) | S_t=s, A_t = a, S_{t+1}=s', A_{t + 1}=a'\big] \\
    &= \tilde r(s, a) + \sum_{s', a'} p^\pi(s', a' | s, a) \mathbb{E}_\pi \big[\tilde r(S_{t+1}, A_{t+1}) + \dots + \\
    & \quad \quad \tilde r(S_{T-1},A_{T-1}) + r(S_T, A_T) |S_{t+1}=s', A_{t + 1}=a'\big] \\
    &= \tilde r(s, a) + \sum_{s', a'} p^\pi(s', a' | s, a) q^\pi(s', a') \\
    &= \tilde r(s, a) + \expectation[\big]{s', a' \sim p^\pi(\cdot, \cdot | s, a)}{q^\pi(s',a')}.
\end{align}

Then, the optimal state-action value function for non-terminal states satisfies
\begin{align}
    q^*(s,a) &= \max_\pi \Big \{
    \tilde r(s, a) + \expectation[\big]{s', a' \sim p^\pi(\cdot, \cdot | s, a)}{q^*(s',a')}
    \Big\} \\
    &= \max_\pi \Big\{
    r(s, a) -\lambda \operatorname{KL}\big(p^\pi(\cdot,\cdot|s, a) \| p^{\pi_d}(\cdot, \cdot | s, a)\big) + \expectation[\big]{s', a' \sim p^\pi(\cdot, \cdot | s, a)}{q^*(s',a')}
    \Big\} \\
    &= r(s, a) + \max_\pi \left \{
    -\lambda \expectation[\bigg]{s', a' \sim p^\pi(\cdot, \cdot |s, a)}{\ln\frac{p^\pi(s', a' | s, a)}{p^{\pi_d}(s', a' |s,a)} - \frac{q^*(s',a')}{\lambda}}
    \right \} \\
    &= r(s,a) + \max_\pi \left \{
    -\lambda \expectation[\bigg]{s', a' \sim p^\pi(\cdot, \cdot |s, a)} {\ln\frac{p^\pi(s', a' | s, a)}{p^{\pi_d}(s', a' |s,a)\exp\big(q^*(s',a')/\lambda\big)}}
    \right \}.
\end{align}
The expectation term above resembles a KL divergence, but the denominator of the fraction is not properly normalized. Define the normalization factor $C = \sum_{s', a'} p^{\pi_d}(s', a' | s,a) \exp\big(q^*(s',a')/\lambda\big)$. The above equation then becomes
\begin{align}
    q^*(s,a) &= r(s,a) + \max_\pi \left \{
    -\lambda \expectation[\bigg]{s', a' \sim p^\pi(\cdot, \cdot |s, a)} {\ln\frac{p^\pi(s', a' | s, a) / C}{p^{\pi_d}(s', a' |s,a)\exp\big(q^*(s',a')/\lambda\big) / C}}
    \right \} \\
    &= r(s,a) + \lambda \ln C + \max_\pi \bigg \{
    -\lambda \operatorname{KL} \Big(p^\pi(\cdot, \cdot|s, a)\|p^{\pi_d}(\cdot,\cdot|s,a)\exp \big(q^*(\cdot,\cdot)/\lambda \big)/C \Big)
    \bigg \} \\
    &= r(s,a)  + \lambda \ln C, 
    \label{eq:67}
\end{align}
where the last equality follows from the fact that the minimum of the KL divergence is 0. Note that in order to make the KL divergence 0, the optimal policy needs to satisfy 
\begin{align}
    p^{\pi^*}(s', a' | s,a) &\propto p^{\pi_d}(s', a'|s, a) \exp\big(q^*(s',a') / \lambda \big) \\
    \pi^*(a'|s') p(s'|s,a) &\propto \pi_d(a'|s')p(s'|s,a) \exp\big(q^*(s',a')/\lambda\big) \\
    \pi^*(a'|s')  &\propto \pi_d(a'|s') \exp \big(q^*(s',a')/\lambda \big)
\end{align}
Then, given the optimal state-action value function and the default policy, it is straightforward to retrieve the optimal policy, $\pi^*$:
\begin{align}
    \pi^*(a | s) = \frac{\pi_d(a | s) \exp \big(q^*(s,a) / \lambda \big)}{\sum_{a'} \pi_d(a'|s)\exp \big(q^*(s,a') / \lambda \big)}
\end{align}

Substituting $C$ back to Equation~\ref{eq:67}, we have
\begin{align}
    q^*(s,a) &= r(s,a) + \lambda \ln \sum_{s', a'} p^{\pi_d}(s', a' | s,a) \exp \big(q^*(s',a')/\lambda \big) \\
    q^*(s,a) / \lambda &= r(s,a) / \lambda + \ln \sum_{s', a'} p^{\pi_d}(s', a' | s,a) \exp \big(q^*(s',a')/\lambda \big) \\
    \exp \big(q^*(s,a)/\lambda \big) &= \exp \big(r(s,a)/\lambda \big) \left(\sum_{s', a'} p^{\pi_d}(s', a' | s,a) \exp\big(q^*(s',a')/\lambda\big) \right)
\end{align}

We now represent the above equation in matrix form. Let $\bar{\mbf q} \in \mathbb{R}^{|\mathcal{S}||\mathcal{A}|}$ be the vector of optimal state-action values, $\bar{\mbf r} \in \mathbb{R}^{|\mathcal{S}||\mathcal{A}|}$ be the vector of state-action rewards, and $\bar{\mbf P}^{\pi_d} \in \mathbb{R}^{|\mathcal{S}||\mathcal{A}| \times |\mathcal{S}||\mathcal{A}|}$ be the matrix of transition probabilities between state-action pairs under the default policy, i.e., $\bar{\mbf P}^{\pi_d}(sa,s'a') = p^{\pi_d}(s', a' | s, a)$. Furthermore, let $N, T$ be the set of indices of non-terminal and terminal state-action pairs respectively. The above equation, defined for non-terminal states, can then be written as
\begin{align}
    \exp(\bar{\mbf q}_N / \lambda) = \operatorname{diag}(\exp(\bar{\mbf r}_N / \lambda)) (\bar{\mbf P}^{\pi_d}_{NN} \exp(\bar{\mbf q}_N/\lambda) + \bar{\mbf P}^{\pi_d}_{NT}  \exp(\bar{\mbf q}_T/\lambda)) \\
    \operatorname{diag}(\exp(-\bar{\mbf r}_N / \lambda)) \exp(\bar{\mbf q}_N / \lambda) =  \bar{\mbf P}^{\pi_d}_{NN} \exp(\bar{\mbf q}_N/\lambda) + \bar{\mbf P}^{\pi_d}_{NT}  \exp(\bar{\mbf r}_T/\lambda)\\
    \big[\operatorname{diag}(\exp(-\bar{\mbf r}_N / \lambda)) - \bar{\mbf P}^{\pi_d}_{NN} \big] \exp(\bar{\mbf q}_N / \lambda) =  \bar{\mbf P}^{\pi_d}_{NT} \exp(\bar{\mbf r}_T/\lambda)  \\
     \exp(\bar{\mbf q}_N / \lambda) = \big[\operatorname{diag}(\exp(-\bar{\mbf r}_N / \lambda) )- \bar{\mbf P}^{\pi_d}_{NN} \big]\inv  \bar{\mbf P}^{\pi_d}_{NT} \exp(\bar{\mbf r}_T/\lambda)
\end{align}
The DR for state-action-dependent rewards is then $\big[\operatorname{diag}(\exp(-\bar{\mbf r}_N / \lambda)) - \bar{\mbf P}^{\pi_d}_{NN} \big]\inv$ for non-terminal states, and $\big[\operatorname{diag}(\exp(-\bar{\mbf r} / \lambda)) - \bar{\mbf P}^{\pi_d} \big]\inv$ for all states.

\subsection{Convergence Proof for Dynamic Programming Algorithm}
\label{appendix:DP_convergence_proof}
\renewcommand{\thetheorem}{4.1}
\begin{theorem}
    Let $\mathcal{S}_N$ denote the set of non-terminal states. Assume $r(s) < 0 \ \forall s \in \mathcal{S}_N$. Let $\mbf R = \operatorname{diag}(\exp(-\mbf r / \lambda))$, where $\mbf r$ is the vector of all state rewards. Let $\mbf \Zeta_0 = \mbf R^{-1}$. The update rule
    \begin{align}
        \mbf \Zeta_{k + 1} = \mbf R^{-1} + \mbf R^{-1} \mbf P^{\pi_d} \mbf \Zeta_k 
        \label{eq:DR_DP_update}
    \end{align}
    converges to the DR, that is, $\lim_{k \to \infty} \mbf \Zeta_{k} = \mbf \Zeta$.
\end{theorem}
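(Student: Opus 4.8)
The plan is to unroll the recursion in Equation~\ref{eq:DR_DP_update} into a partial Neumann series and then pass to the limit. Write $\mbf M \coloneqq \mbf R\inv \mbf P^{\pi_d}$. First I would prove by induction on $k$ that
\[
    \mbf \Zeta_k = \sum_{i=0}^{k} \mbf M^i \mbf R\inv .
\]
The base case $k=0$ holds since $\mbf \Zeta_0 = \mbf R\inv = \mbf M^0 \mbf R\inv$, and for the inductive step one substitutes the hypothesis into $\mbf \Zeta_{k+1} = \mbf R\inv + \mbf M \mbf \Zeta_k$ and re-indexes the sum. This identifies the candidate limit as $\bigl(\sum_{i=0}^{\infty} \mbf M^i\bigr)\mbf R\inv$, provided that series converges.

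The crux is therefore to establish convergence of $\sum_{i=0}^{\infty}\mbf M^i$, for which it suffices that $\|\mbf M\| < 1$ in some submultiplicative matrix norm. Here I would use the hypothesis $r(s) < 0$ for all $s \in \mathcal{S}_N$ together with the convention (already adopted for the transition matrices in this paper) that the rows corresponding to terminal states are zero. The $i$-th row of $\mbf M = \mbf R\inv \mbf P^{\pi_d}$ equals $\exp\!\big(r(s_i)/\lambda\big)\,\mbf P^{\pi_d}(s_i,\cdot)$, so its $\ell_1$-norm is $\exp\!\big(r(s_i)/\lambda\big)$ for non-terminal $s_i$ (since $\mbf P^{\pi_d}$ is row-stochastic on non-terminal rows) and $0$ for terminal $s_i$. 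Hence $\|\mbf M\|_\infty = \max_{s\in\mathcal{S}_N}\exp\!\big(r(s)/\lambda\big) \eqqcolon c < 1$, so $\|\mbf M^i\|_\infty \le c^i$ and the partial sums $\sum_{i=0}^{k}\mbf M^i$ form a Cauchy sequence; thus the series converges, and from the telescoping identity $(\mbf I - \mbf M)\sum_{i=0}^{k}\mbf M^i = \mbf I - \mbf M^{k+1} \to \mbf I$ it equals $(\mbf I - \mbf M)\inv$ (in particular $\mbf I - \mbf M$ is invertible).

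Finally I would let $k \to \infty$ in the unrolled identity and simplify. Since $\mbf R = \operatorname{diag}\!\big(\exp(-\mbf r/\lambda)\big)$ has strictly positive diagonal entries it is invertible, so
\[
    \lim_{k\to\infty}\mbf \Zeta_k = (\mbf I - \mbf M)\inv \mbf R\inv = \bigl[\mbf R(\mbf I - \mbf R\inv \mbf P^{\pi_d})\bigr]\inv = (\mbf R - \mbf P^{\pi_d})\inv = \mbf \Zeta ,
\]
which is exactly the closed form of the DR in Equation~\ref{eq:DR_definition}. The only step where the hypotheses do genuine work is the bound $\|\mbf M\|_\infty < 1$; everything else is an induction and routine manipulation of a convergent Neumann series. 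If one prefers not to invoke the terminal-row convention, the bound $\rho(\mbf M) < 1$ can instead be obtained from the Gershgorin circle theorem applied to $\mbf M$, mirroring the argument used in the proof of Theorem~\ref{theorem:SR_DR_equiv_main}.
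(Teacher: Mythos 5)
Your proposal is correct and follows essentially the same route as the paper's proof: unrolling the recursion into a partial Neumann series, bounding $\|\mbf R\inv \mbf P^{\pi_d}\|_\infty = \max_{s\in\mathcal{S}_N}\exp\big(r(s)/\lambda\big) < 1$ via row-stochasticity on non-terminal rows and the zero terminal rows, and simplifying $(\mbf I - \mbf R\inv\mbf P^{\pi_d})\inv\mbf R\inv$ to $(\mbf R - \mbf P^{\pi_d})\inv = \mbf \Zeta$. The explicit induction and telescoping identity you include are just more careful renderings of steps the paper states directly.
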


\begin{proof}
    Recursively expanding the right hand side of Equation~\ref{eq:DR_DP_update}, we have
    \begin{align}
        \mbf \Zeta_{k + 1} &= \mbf R\inv + (\mbf R\inv \mbf P^{\pi_d}) \mbf R\inv + \dots + (\mbf R\inv \mbf P^{\pi_d})^{k + 1} \mbf R \inv \\
        &= 
        \left [ \sum_{t=0}^{k+1} (\mbf R\inv \mbf P^{\pi_d})^t \right ]
        \mbf R\inv .
    \end{align}

    Taking the limit, we have
    \begin{align}
        \lim_{k \to \infty} \mbf \Zeta_{k} &= \left [ \sum_{t=0}^{\infty} (\mbf R\inv \mbf P^{\pi_d})^t \right ] \mbf R\inv .
    \end{align}

    Note that $\left [ \sum_{t=0}^{\infty} (\mbf R\inv \mbf P^{\pi_d})^t \right ]$ is a Neumann series. We now establish the convergence of this Neumann series. Since $\sum_{s'}\mbf P^{\pi_d} (s, s')$ is equal to 1 for every non-terminal $s$, and 0 for all terminal $s$, we have \looseness=-1
    \begin{align}
        \| \mbf R^{-1} \mbf P^{\pi_d} \|_\infty = \max_{s \in \mathcal{S}} \exp \big(r(s) / \lambda \big) \sum_{s'} \mbf P^{\pi_d}(s, s') = \max_{s \in \mathcal{S}_N} \exp \big(r(s) / \lambda \big) < 1, \label{eq:Neumann_converge}
    \end{align}
    where the final inequality follows from our assumption that $r(s) < 0$. Equation~\ref{eq:Neumann_converge} is a sufficient condition for the convergence of the Neumann series, so we know that this series converges to $(\mbf I - \mbf R^{-1} \mbf P^{\pi_d})^{-1}$. Plugging this back, we have
    \begin{align}
        \lim_{k\to\infty}\mbf \Zeta_k 
        &= (\mbf I - \mbf R^{-1} \mbf P^{\pi_d})^{-1} \mbf R^{-1} \\
        &= \big[\mbf R^{-1} (\mbf R - \mbf P^{\pi_d})\big]^{-1} \mbf R^{-1} \\
        &= \big[\mbf R \mbf R^{-1} (\mbf R - \mbf P^{\pi_d})\big]^{-1} \quad\quad\quad \text{because $\mbf B\inv \mbf A\inv = (\mbf A\mbf B)\inv$} \\
        &= \Big[\operatorname{diag}\big(\exp(-\mbf r / \lambda)\big) - \mbf P^{\pi_d}\Big]^{-1} = \mbf \Zeta.
    \end{align}
\end{proof}

\subsection{{Proto-Representation in Maximum Entropy RL}}
\label{appendix:mer}
Similar to how the SR is a proto-representation derived in the standard RL formulation, and the DR in the linearly solvable MDPs, we can derive proto-representations for other formulations. Specifically, due to its popularity, and for completeness, we introduce a proto-representation in the maximum entropy (MaxEnt) RL framework~\citep{haarnoja2018soft, haarnoja2017reinforcement}. We call it {\it maximum entropy representation (MER)}. \looseness=-1

In our MaxEnt RL formulation, at each time step $t$, the agent receives both the reward $r(S_t)$ and an entropy bonus $\lambda \mathcal{H} \big(p^\pi(\cdot | S_t) \big)$, where $p^\pi(\cdot | S_t)$ denotes transition probabilities over successor states given policy $\pi$, and $\lambda > 0$ controls the weight of the entropy term. Assuming $\gamma = 1$, we define the MER as: \looseness=-1

\begin{definition}
    Let $\mbf r$ be the vector of state rewards, and $\mbf A$ be the adjacency matrix, i.e., $A(s, s')$ is equal to 1 if $s'$ can be reached in one step from $s$, and 0 otherwise. The MER is defined as
\begin{align}
    \mbf M = \Big[\operatorname{diag} \big(\exp(-\mbf r) / \lambda \big) - \mbf A \Big]\inv.
\end{align}
\end{definition}

The MER differs from the DR (Equation~\ref{eq:DR_definition}) in that it uses the adjacency matrix rather than the default policy’s transition probabilities. This connection arises because maximum entropy RL and linearly solvable MDPs are closely related as one can be transformed into the other~\citep{dvijotham2010inverse}. As a result, most of our contributions readily extend to the MER. Preliminary analyses showed that the MER also behaves similarly to the DR, so we focus our experiments on the DR. Still, the MER and DR are distinct proto-representations with different formulations, assumptions, and value functions, which can lead to different optimal policies. While we did not observe substantial differences in the settings we considered, exploring them in other regimes remains an open direction.

We now provide the derivation of the MER.
To derive the MER for deterministic transitions, we can simply formulate the entropy as $\mathcal{H}\big(\pi(\cdot | S_t)\big)$. However, to derive the MER for stochastic transitions, we need to formulate the entropy slightly differently.
We assume that at every time step $t$, the agent receives a reward of $\tilde r(S_t) = r(S_t) + \lambda \mathcal{H}\big(p^\pi(\cdot | S_t)\big)$, where $p^\pi(\cdot | S_t)$ denotes the transition probabilities over the successor states of $S_t$ given policy $\pi$, and $\lambda > 0$ determines the relative importance of the entropy term.  \looseness=-1

Let $T$ be a random variable denoting the final time step of an episode. The return starting from time $t$ is then
\begin{align}
    G_t = \tilde r(S_t) + \dots + \tilde r(S_{T - 1}) + r(S_T).
\end{align}
Note that similar to prior work~\citep{todorov2009efficient, piray2021linear}, we allow the agent to obtain a final reward of $r(S_T)$ at the terminal state, $S_T$.

The value function is defined as $v^\pi(s) = \expectation{\pi}{G_t | S_t = s}$. For terminal states, $v^\pi(s)$ is simply $r(s)$. For non-terminal states, we have
\begin{align}
    v^\pi(s) &= \expectation{\pi}{G_t | S_t = s} \\
    &= \expectation[\big]{\pi}{\tilde r(S_t) + \dots + \tilde r(S_{T - 1}) + r(S_T) | S_t = s} \\
    &= \tilde r(s) + \sum_{s'} p^\pi(s' | s) \expectation[\big]{\pi}{\tilde r(S_{t + 1}) + \dots + \tilde r(S_{T - 1}) + r(S_T) | S_t = s, S_{t + 1}=s'} \\
    &= \tilde r(s) + \sum_{s'} p^\pi(s' | s) \expectation[\big]{\pi}{\tilde r(S_{t + 1}) + \dots + \tilde r(S_{T - 1}) + r(S_T) | S_{t + 1}=s'} \\
    &= \tilde r(s) + \sum_{s'} p^\pi(s' | s) v^\pi(s') \\
    &= \tilde r(s) + \expectation[\big]{s' \sim p^\pi(\cdot | s)}{v^\pi(s')}.
\end{align}

The optimal value function for non-terminal states satisfies
\begin{align}
    v^*(s) &= \max_\pi \Big\{ \tilde r(s) + \expectation[\big]{s' \sim p^\pi(\cdot | s)}{v^*(s')} \Big\} \\
    &= \max_\pi \Big\{
    r(s) + \lambda \mathcal{H}\big(p^\pi(\cdot | s)\big) + \expectation[\big]{s' \sim p^\pi(\cdot | s)}{v^*(s')}
    \Big\} \\
    &= r(s) + \max_\pi \Big\{
    - \lambda \expectation[\big]{s' \sim p^\pi(\cdot | s)}{\ln p^\pi(s' | s)} + \expectation[\big]{s' \sim p^\pi(\cdot | s)}{v^*(s')} \Big\} \\
    &= r(s) + \max_\pi  \left \{ 
    -\lambda \expectation[\bigg]{s' \sim p^\pi(\cdot | s)}{\ln \frac{p^\pi(s'|s)}{\exp(v^*(s') / \lambda)}} \right \}.
\end{align}
The expectation term above resembles a KL divergence, but $\exp \big(v^*(s') / \lambda \big)$ is not a normalized distribution. To normalize it, we define $C(s) = \sum_{s' \in \mathcal{S}_s} \exp\big(v^*(s') / \lambda \big)$, where $\mathcal{S}_s = \big \{s' | \exists\pi: p^\pi(s' | s) > 0  \big\}$ denotes the set of successor states of state $s$. We then have
\begin{align}
    v^*(s) &= r(s) + \max_\pi \left \{ 
    -\lambda \expectation[\bigg]{s' \sim p^\pi(\cdot | s)}{\ln \frac{p^\pi(s'|s) / C(s)}{\exp\big(v^*(s') / \lambda\big) / C(s)}} \right \} \\
    &= r(s) + \max_\pi \left \{ -\lambda \expectation[\bigg]{s' \sim p^\pi(\cdot | s) }{\ln \frac{p^\pi(s'|s)}{\exp\big(v^*(s') / \lambda\big) / C(s)}} + \lambda \ln C(s) \right \} \\
    &= r(s) + \lambda \ln C(s) + \max_\pi \bigg \{ -\lambda
    \operatorname{KL} \Big(p^\pi(\cdot | s) \| \exp \big(v^*(\cdot) / \lambda \big)/C(s)\Big)
    \bigg\} \\
    &= r(s) + \lambda \ln C(s),
\end{align}
where the final equality follows from the fact that the minimum value of KL divergence is 0.

Then, we have that
\begin{align}
    v^*(s) &= r(s) + \lambda \ln \sum_{s' \in \mathcal{S}_s} \exp\big(v^*(s') / \lambda\big) \\
    v^*(s) / \lambda &= r(s) / \lambda + \ln \sum_{s' \in \mathcal{S}_s} \exp\big(v^*(s') / \lambda\big) \\
    \exp\big(v^*(s) / \lambda\big) &= \exp\big(r(s) / \lambda\big) \left ( \sum_{s' \in \mathcal{S}_s} \exp\big(v^*(s') / \lambda\big) \right )
\end{align}

Now, we express the above equation in a matrix form. Let $\mbf v$ be the vector of optimal state values, $\mbf r$ be the vector of state rewards, and $\mbf A$ be adjacency matrix, i.e. $A(s,s') = \mathbbm{1}_{\{ s' \in \mathcal{S}_{s} \}}$. Furthermore, let $N, T$ be the set of indices of non-terminal and terminal states respectively. The above equation, defined for non-terminal states, can be written as
\begin{align}
    \exp(\mbf v_N / \lambda) = \operatorname{diag}(\exp(\mbf r_N) / \lambda) \big( \mbf A_{NN} \exp(\mbf v_N / \lambda) + \mbf A_{NT} \exp(\mbf v_T / \lambda) \big) \\
     \operatorname{diag}\big(\exp(-\mbf r_N) / \lambda\big) \exp(\mbf v_N / \lambda) =  \mbf A_{NN} \exp(\mbf v_N / \lambda) + \mbf A_{NT} \exp(\mbf r_T / \lambda) \\
    \Big[\operatorname{diag}\big(\exp(-\mbf r_N) / \lambda\big) - \mbf A_{NN} \Big]\exp(\mbf v_N / \lambda) =    \mbf A_{NT} \exp(\mbf r_T / \lambda) \\
    \exp(\mbf v_N / \lambda) =  \Big[\operatorname{diag}\big(\exp(-\mbf r_N) / \lambda\big) - \mbf A_{NN} \Big]\inv  \mbf A_{NT} \exp(\mbf r_T / \lambda)
\end{align}

We define the MER for non-terminal states as $\Big[\operatorname{diag}\big(\exp(-\mbf r_N) / \lambda\big) - \mbf A_{NN} \Big]\inv$. A more general definition of the MER for all states is $\Big[\operatorname{diag}\big(\exp(-\mbf r) / \lambda\big) - \mbf A \Big]\inv$.

\section{More Experiment Details}
\label{appendix:exp-detail}
We provide additional experiment details omitted from the main text due to space constraints.\footnote{The code for this paper can be found at: \texttt{github.com/httse9/Reward-Aware-Proto-Representations}}

\subsection{Numerical Considerations}
\label{appendix:numerical_considerations}
Reward shaping and option discovery experiments involve the top eigenvector of the DR. We describe numerical considerations when performing eigendecomposition of the DR.
First, in practice, we perform eigendecomposition of the symmetrized DR, $\operatorname{Sym}(\mbf \Zeta) = (\mbf \Zeta + \mbf \Zeta^\top) / 2$, to ensure real eigenvalues and eigenvectors.
Second, as the DR involves exponentiating negative rewards (see Eq.~\ref{eq:DR_expanded}), the magnitude of the DR entries, especially the off-diagonal entries, can become very small. To mitigate the resulting numerical issues, we use the library $\texttt{python-flint}$.\footnote{https://github.com/flintlib/python-flint, MIT license} Note, however, that the improved precision comes at the cost of increased runtime. {As a larger $\lambda$ has the effect of reducing the magnitude of negative trajectory returns in the DR (see Eq.~\ref{eq:DR_expanded}), it can alleviate numerical instabilities. In our experiments, we initially started with $\lambda=1$, and slowly increased $\lambda$ by 0.1 until we settled on $\lambda=1.3$, which, paired with the use of $\texttt{python-flint}$, resulted in no numerical issues.}

Finally, the magnitude of the top eigenvector entries can be very small. Therefore, in practice, we use the logarithm of the top eigenvector in place of the top eigenvector. We now show that under mild assumptions, the top eigenvector of the DR is positive, allowing us to take the logarithm. We first present a mild assumption:

\renewcommand{\thetheorem}{\thesection.\arabic{theorem}}    
\begin{assumption}
    \label{assumption:reachable}
    There is only one start state, $s_0$, and it is possible to reach any state from $s_0$ under the default policy, i.e., $\exists \tau_{s_0\to s} : \mbf P^{\pi_d}(\tau_{s_0 \to s}) > 0$ for all $s \in\mathcal{S}$.
\end{assumption}

To avoid introducing unnecessary bias, we use the uniform random policy as the default policy in all of our experiments, which is the standard practice~\citep{piray2021linear, bazarjani_piray_2025}. 
We also perform experiments in grid world environments, where it is possible to reach any state from the start state.
Under these conditions, Assumption~\ref{assumption:reachable} easily holds. We now present the following proposition: \looseness=-1

\begin{proposition}
\label{theorem:top_eigenvector_positive_closed}
    Under Assumption~\ref{assumption:reachable}, the top eigenvector of $\operatorname{Sym}(\mbf \Zeta)$ is positive.
\end{proposition}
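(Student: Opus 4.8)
The plan is to apply the Perron--Frobenius theorem to a suitable nonnegative matrix built from $\mathbf{Z}$ (or its symmetrization), after establishing the required positivity/irreducibility from Assumption~\ref{assumption:reachable}. First I would recall the trajectory expansion of the DR from Equation~\ref{eq:DR_expanded}, namely $\mathbf{Z}(s,s') = \sum_{\tau \in \mathcal{T}_{s\to s'}} \mathbf{P}^{\pi_d}(\tau)\exp(r(\tau)/\lambda)$. Every term in this sum is strictly positive, so $\mathbf{Z}(s,s') > 0$ exactly when $\mathcal{T}_{s\to s'}$ is nonempty, i.e.\ when $s'$ is reachable from $s$ under $\pi_d$; in particular all diagonal entries are positive (the length-one trajectory). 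Under Assumption~\ref{assumption:reachable}, the start state $s_0$ reaches every state, so the row $\mathbf{Z}(s_0,\cdot)$ is strictly positive, and more importantly the reachability relation makes $\mathbf{Z}$ an \emph{irreducible} nonnegative matrix (for any $s,t$ one can route $s \to s' \to \cdots$; at minimum, since $\mathbf{Z}$ has positive diagonal and $s_0$ reaches everything, one should argue irreducibility directly from the grid-world connectivity, or note that $\operatorname{Sym}(\mathbf{Z})$ inherits the needed connectivity because $\operatorname{Sym}(\mathbf{Z})(s,s') > 0$ whenever \emph{either} direction is reachable).

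Next I would pass to $\mathbf{M} := \operatorname{Sym}(\mathbf{Z}) = (\mathbf{Z} + \mathbf{Z}^\top)/2$, which is symmetric, nonnegative, has strictly positive diagonal, and is irreducible (its zero/nonzero pattern is the symmetrized reachability graph, which is connected under Assumption~\ref{assumption:reachable} since every state connects to $s_0$). A symmetric nonnegative irreducible matrix with positive diagonal is in fact primitive, so Perron--Frobenius applies in its strong form: the spectral radius $\rho(\mathbf{M})$ is a simple eigenvalue, and the associated eigenvector can be chosen with all entries strictly positive, and it is the \emph{unique} (up to scaling) nonnegative eigenvector. The only remaining point is to identify ``the top eigenvector'' of $\mathbf{M}$ with the Perron eigenvector: since $\mathbf{M}$ is symmetric its eigenvalues are real, and by definition the top eigenvector corresponds to the largest eigenvalue $\mu_{\max}$; I would argue $\mu_{\max} = \rho(\mathbf{M})$ because $\mathbf{M}$ has a positive entry (indeed positive diagonal) so $\rho(\mathbf{M}) \ge \mathbf{M}(s,s) > 0$, and no eigenvalue of a real symmetric matrix can exceed $\rho$ in absolute value while being real and positive unless it equals $\rho$ — hence the largest algebraic eigenvalue equals the spectral radius, whose eigenvector is the (strictly positive) Perron vector.

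The main obstacle I anticipate is rigorously establishing irreducibility of $\operatorname{Sym}(\mathbf{Z})$ from Assumption~\ref{assumption:reachable} alone: reachability from $s_0$ to every state gives a ``star'' of directed paths out of $s_0$, which symmetrizes to a connected undirected graph on all states, so this should go through cleanly, but one must be careful that $\mathbf{Z}(s,s') > 0$ iff $s'$ is $\pi_d$-reachable from $s$ (needing that no cancellation occurs — true here since all summands in Equation~\ref{eq:DR_expanded} are positive) and that the zero pattern of $\mathbf{Z}$ controls the zero pattern of $\operatorname{Sym}(\mathbf{Z})$. A cleaner alternative, which I would mention as a fallback, is to bypass $\operatorname{Sym}$: note $\mathbf{Z} = (\mathbf{I} - \mathbf{R}^{-1}\mathbf{P}^{\pi_d})^{-1}\mathbf{R}^{-1}$ is a nonnegative matrix with positive Perron vector, and since $\operatorname{Sym}(\mathbf{Z})$ is a small perturbation-free symmetrization sharing the same connectivity graph, the Perron vectors align; but the direct Perron--Frobenius argument on $\operatorname{Sym}(\mathbf{Z})$ above is the most self-contained route. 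I would present the proof in that order: (1) positivity of entries of $\mathbf{Z}$ via Equation~\ref{eq:DR_expanded}, (2) $\operatorname{Sym}(\mathbf{Z})$ is symmetric nonnegative with positive diagonal and irreducible, hence primitive, (3) Perron--Frobenius gives a positive eigenvector for $\rho(\operatorname{Sym}(\mathbf{Z}))$, (4) the top (largest-eigenvalue) eigenvector equals this Perron vector, hence is positive.
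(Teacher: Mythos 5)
Your proposal is correct and follows essentially the same route as the paper: use the trajectory expansion (Equation~\ref{eq:DR_expanded}) together with Assumption~\ref{assumption:reachable} to get a strictly positive row for $s_0$ (hence positive first row and column of $\operatorname{Sym}(\mbf \Zeta)$), conclude primitivity, and invoke Perron's theorem to obtain a positive eigenvector for the largest eigenvalue. The only cosmetic difference is that the paper establishes primitivity directly by observing $\operatorname{Sym}(\mbf \Zeta)^2$ has all positive entries (every $(i,j)$ product path can route through $s_0$), whereas you go via irreducibility plus positive diagonal --- both are valid and equally short.
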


\begin{proof}
    We can rearrange the rows of $\mbf \Zeta$ so that the first row corresponds to the start state, $s_0$. By Assumption~\ref{assumption:reachable} and Eq.~\ref{eq:DR_expanded}, the first row of $\mbf \Zeta$ has all positive entries, i.e., $\mbf \Zeta(s_0, s) > 0\ \forall s \in \mathcal{S}$, while all other entries are non-negative. After symmetrization, $\operatorname{Sym}(\mbf \Zeta)$ has positive first row and first column. It is not hard to see, then, that $\operatorname{Sym}(\mbf \Zeta)^2$ has all positive entries, and $\operatorname{Sym}(\mbf \Zeta)$ is a primitive matrix~\citep{pillai2005perron}. Finally, by the Perron's theorem~\citep{pillai2005perron}, the largest eigenvalue of $\operatorname{Sym}(\mbf Z)$ is positive, and the corresponding eigenvector is positive.
\end{proof}

Proposition~\ref{theorem:top_eigenvector_positive_closed} applies to the case when the DR is computed in closed form. We now extend the proposition to the case when the DR is learned by TD learning. When learning the DR using TD learning (Eq.~\ref{eq:DR_TD}), the agent interacts with the environment and learn the DR in an incremental, online manner. Let $\mathcal{S}_V$ be the set of states visited by the agent. In this setting, the top eigenvector refers to the top eigenvector of the sub-matrix of the symmetrized DR corresponding to the states in $\mathcal{S}_V$. \looseness=-1

\begin{proposition}
    \label{theorem:top_eigenvector_positive_td}
    Assume that there is only one start state $s_0$. Let $\mathcal{D}$ be a dataset containing the transitions collected by an agent starting from $s_0$ and following the default policy. Importantly, the transitions are stored in the order in which they were collected. Initialize $\mbf \Zeta$ as the identity matrix. When learning the DR by TD learning (Eq.~\ref{eq:DR_TD}) with a step size $\alpha \in (0, 1)$, a backward sweep through $\mathcal{D}$ guarantees that the top eigenvector of $\operatorname{Sym}(\mbf \Zeta_{VV})$ is positive, where $\mbf \Zeta_{VV}$ is the sub-matrix of $\mbf \Zeta$ that corresponds to the states in $\mathcal{S}_V$.
\end{proposition}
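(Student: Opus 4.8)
The plan is to mirror the proof of Proposition~\ref{theorem:top_eigenvector_positive_closed}: show that after the backward sweep $\operatorname{Sym}(\mbf \Zeta_{VV})$ is a non-negative matrix with a strictly positive row and column (and strictly positive diagonal), hence primitive, and then invoke Perron's theorem. All the real work is in transferring the ``positive $s_0$-row'' property from the closed-form setting to the TD setting, where it becomes a statement about how strict positivity propagates along the agent's trajectory under backward TD updates.

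First I would record two invariants of the TD update in Eq.~\ref{eq:DR_TD}. A single update of an entry has the form $\mbf \Zeta(s,j) \leftarrow (1-\alpha)\,\mbf \Zeta(s,j) + \alpha Y_j$ with $Y_j \ge 0$, since $\exp(r/\lambda) > 0$, the indicator is non-negative, and (inductively) every entry read is non-negative. Because $\alpha \in (0,1)$ this gives: (i) all entries of $\mbf \Zeta$ stay $\ge 0$, starting from $\mbf \Zeta_0 = \mbf I$; and (ii) the new value is $\ge (1-\alpha)\cdot(\text{old value})$, so once an entry is strictly positive it stays strictly positive for the rest of the sweep. In particular every diagonal entry, initialised to $1$, remains strictly positive throughout, including for terminal states whose rows are never touched.

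The crux is to show $\mbf \Zeta(s_0,s) > 0$ for every $s \in \mathcal{S}_V$ at the end of the sweep. Fix such an $s$ and take the portion of the data up to the agent's first visit to $s$; this yields a walk $s_0 = x_0 \to x_1 \to \cdots \to x_m = s$ whose transitions $x_0\!\to\!x_1,\dots,x_{m-1}\!\to\!x_m$ occupy a contiguous block of positions in $\mathcal{D}$ and whose source states $x_0,\dots,x_{m-1}$ are non-terminal. In the backward sweep these transitions are processed consecutively in the order $x_{m-1}\!\to\!x_m$, then $x_{m-2}\!\to\!x_{m-1}$, $\dots$, then $x_0\!\to\!x_1$. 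I would induct down this chain: before $x_{m-1}\!\to\!x_m$ is processed, $\mbf \Zeta(x_m,x_m)=\mbf \Zeta(s,s) > 0$ by invariant (ii), so the update makes $\mbf \Zeta(x_{m-1},s) > 0$; and each subsequent update reads the still-positive entry $\mbf \Zeta(x_{j+1},s)$ (untouched since the previous step) and writes $\mbf \Zeta(x_j,s) \ge \alpha\exp(r/\lambda)\,\mbf \Zeta(x_{j+1},s) > 0$. After the final step $\mbf \Zeta(s_0,s) > 0$; any transitions processed afterwards in the sweep can only modify row $s_0$ via $(1-\alpha)\,\mbf \Zeta(s_0,s) + \alpha(\cdot \ge 0)$, so by invariant (ii) the entry remains strictly positive through the end.

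To conclude, note $s_0 \in \mathcal{S}_V$, and $\mbf \Zeta_{VV}\ge 0$ has strictly positive $s_0$-row, so $\operatorname{Sym}(\mbf \Zeta_{VV})$ has strictly positive $s_0$-row and $s_0$-column. Then the $(i,j)$ entry of $\operatorname{Sym}(\mbf \Zeta_{VV})^2$ is at least $\operatorname{Sym}(\mbf \Zeta_{VV})(i,s_0)\operatorname{Sym}(\mbf \Zeta_{VV})(s_0,j) > 0$, so $\operatorname{Sym}(\mbf \Zeta_{VV})^2$ is entrywise positive and $\operatorname{Sym}(\mbf \Zeta_{VV})$ is primitive~\citep{pillai2005perron}. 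Being symmetric, its largest eigenvalue equals its spectral radius, which by Perron's theorem is a positive eigenvalue with a strictly positive eigenvector. I expect the main obstacle to be the bookkeeping in the chain argument: carefully verifying that the transitions of the first-visit walk form a contiguous block processed as a unit in the backward sweep, that updates occurring earlier in the sweep cannot spoil the diagonal positivity the chain relies on, and that updates occurring later cannot spoil the positivity established in row $s_0$.
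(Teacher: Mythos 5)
Your proposal is correct and follows essentially the same route as the paper's proof: propagate strict positivity backward along the trajectory so that the $s_0$-row of $\mbf \Zeta_{VV}$ becomes entrywise positive, then symmetrize, observe primitivity via a positive row and column, and apply Perron's theorem. You simply make explicit the monotonicity invariants and the chain induction that the paper leaves as ``it is not hard to see.''
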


\begin{proof}
    We can rearrange the rows of $\mbf \Zeta_{VV}$ so that the first row corresponds to the start state, $s_0$.
    The dataset, $\mathcal{D}$, consists of one or more trajectories starting from $s_0$. When iterating through a trajectory in a backward manner, for the row in $\mbf Z_{VV}$ corresponding to a state, $s$, all the entries corresponding to states visited after $s$ in the trajectory will be updated to have positive values.
    It is then not hard to see that after iterating through all transitions, the first row of $\mbf \Zeta_{VV}$ has all positive entries, while the remaining rows are non-negative. After symmetrization, $\operatorname{Sym}(\mbf \Zeta_{VV})$ has positive first row and first column. It is then easy to see that $\operatorname{Sym}(\mbf \Zeta_{VV})$ is a primitive matrix and, by Perron's theorem~\citep{pillai2005perron}, its top eigenvector is positive. \looseness=-1
\end{proof}

For MDPs with an initial distribution, $p_0$, over multiple start states, we can define a new start state that transitions to these start states under $p_0$, no matter what action is taken. The reward for this new start state also does not affect the optimal policy, and only needs to be negative (see Theorem~\ref{theorem:dp_convergence_main}). Proposition~\ref{theorem:top_eigenvector_positive_closed} and~\ref{theorem:top_eigenvector_positive_td} can then be applied.

In practice, when learning the DR using TD learning, for example, in RACE (Algorithm~\ref{alg:race}), we initialize the DR as the identity matrix, and perform at least one backward sweep through the dataset of collected transitions. We then compute the eigendecomposition for $\operatorname{Sym(\mbf \Zeta_{VV})}$, and take the logarithm of the resulting top eigenvector. To project this transformed eigenvector in $\mathbb{R}^{|\mathcal{S}_V|}$ back to $\mathbb{R}^{|\mathcal{S}|}$, we simply augment this vector with zeros for states in $\mathcal{S} \setminus \mathcal{S}_V$. 

\subsection{Reward Shaping}
\label{appendix:reward_shaping}

Figure~\ref{fig:reward_shaping_no_low_reward} shows reward-shaping results in the variations of the environments shown in Figure~\ref{fig:environments} without low-reward regions. In these environments, we do not observe a significant performance difference between potential-based reward shaping using the DR and the SR. This is because in these environments, all states apart from the terminal state have the same reward. Although Theorem~\ref{theorem:SR_DR_equiv_main} cannot be directly applied since the terminal state has a different reward ($0$) than the rest of the states ($-1$), it is reasonable to expect that the DR and the SR have similar eigenvectors in this setting, leading to similar results. \looseness=-1

\begin{figure}
    \centering
    \includegraphics[width=\linewidth]{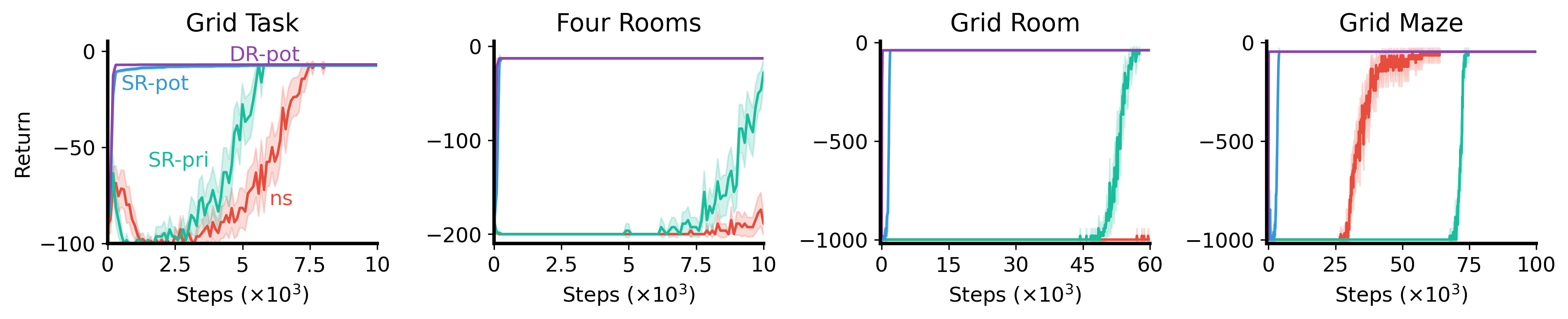}
    \caption{The average undiscounted return of potential-based reward shaping using the DR (\texttt{DR-pot}) and the SR (\texttt{SR-pot}), the prior distance-based reward shaping using the SR (\texttt{SR-prior})~\citep{wang2021towards}, and no shaping (\texttt{ns}) over 50 independent runs in the variations of the environments shown in Figure~\ref{fig:environments} without low-reward regions. The shaded area indicates 95\% confidence interval.}
    \label{fig:reward_shaping_no_low_reward}
\end{figure}

\subsection{Option Discovery}
\label{appendix:eigenoption_discovery}

In Algorithm~\ref{alg:race}, we present the full algorithm of RACE, an instance of the framework of representation-driven option discovery~\citep[ROD;][]{machado2023temporal} that relies on the DR instead of the SR for option discovery. 
Note that by learning the SR instead of the DR in Algorithm~\ref{alg:race}, we recover CEO~\citep{machado2023temporal}, and therefore omit the corresponding pseudocode for brevity.
In this version of RACE, given the complexity of performing importance sampling with options, we treat transitions generated from options the same as those generated from the default policy, and use both for learning the DR. \looseness=-1

We share the following hyperparameters for RACE and CEO: $\alpha_0=0.1$, $ \gamma_0=0.99$, $N_\text{step}=100$, $N_\text{iter}=50$ for grid task and four rooms, and $N_\text{iter} = 120$ for grid room and grid maze.
We use $\gamma=0.99$ for learning the SR, $\lambda=1.3$ for learning the DR, and sweep over the remaining hyperparameters, as described in Table~\ref{table:eigenoption_discovery_hyperparameters}. We perform 10 independent runs for each hyperparameter setting.

\begin{figure}[t]
\begin{algorithm}[H]
\caption{Reward-Aware Covering Eigenoptions (RACE)}
\label{alg:race}
\begin{algorithmic}
    \State {\bfseries Input:} $\alpha$ ; \Comment{Step size for learning the DR}
    \State \quad\quad\quad $\lambda$ ; \Comment{Relative importance of the deviation cost for the DR}
    \State \quad\quad\quad $N_\text{learn}$ ; \Comment{Number of iterations through collected dataset to learn the DR}
    \State \quad\quad\quad $N_\text{option}$ ; \Comment{Number of latest eigenoptions to keep}
    \State \quad\quad\quad $p_\text{option}$ ; \Comment{Probability of sampling an option instead of a primitive action}
    \State \quad\quad\quad $\alpha_0, \gamma_0$ ; \Comment{Step size and discount factor for learning the options' policies}
    \State \quad\quad\quad $N_\text{steps}$ ; \Comment{Number of interactions with the environment in each ROD iteration}
    \State \quad\quad\quad $N_\text{iter}$ ;  \Comment{Number of iterations of the ROD cycle}
    \State $\mathcal{D} \longleftarrow \emptyset$
    \State $\Omega \longleftarrow \emptyset$
    \State $\mbf \Zeta \longleftarrow \mbf I$
    \For{$i \longleftarrow 0$ to $N_\text{iter}$}
    \State \(\triangleright\) Collect samples
    \State $\mathcal{D}_\text{curr} \longleftarrow \emptyset$ 
    \For{$j \longleftarrow 0$ \textbf{to} $N_\text{steps}$}
        \State {With prob. $1 - p_\text{option}$ randomly sample, uniformly, a primitive action $a$; otherwise}
        \State \quad uniformly sample an option $\omega$ from $\Omega$
        \If{primitive action was sampled}
            \State In state $s$, take action $a$ and observe state $s'$ and reward $r$
            \State $\mathcal{D}_\text{curr} \longleftarrow \mathcal{D}_\text{curr} \| (s, a, r, s')$ \Comment{Append transition to current dataset}
        \Else{}
            \While{option $\omega$ not terminated}
                \State In state $s$, take action $a$ sampled under the option policy, 
                \State \quad and observe state $s'$ and reward $r$
                \State $\mathcal{D}_\text{curr} \longleftarrow \mathcal{D}_\text{curr} \| (s, a, r, s')$ \Comment{Append transition to current dataset}
            \EndWhile
        \EndIf
    \EndFor
    \State \(\triangleright\) Learn the default representation
    \State Update $\mbf \Zeta$ by $N_\text{learn}$ sweeps through $\mathcal{D}_\text{curr}$ using Equation~\ref{eq:DR_TD} with step size $\alpha$ 
    \State \quad and deviation importance $\lambda$
    \State \(\triangleright\) Learn eigenoption
    \State $\mathcal{D} \longleftarrow \mathcal{D} \| \mathcal{D}_\text{curr}$
    \State Compute the top eigenvector $\mbf e$ of $\mbf \Zeta$
    \State Use Q-learning to compute an eigenoption $\omega'$ using $\mbf e$, $\mathcal{D}$, $\alpha_0$, and $\gamma_0$
    \State \(\triangleright\) Add the learned eigenoption to the set of eigenoptions
    \State Add $w'$ to $\Omega$, and remove the oldest option from $\Omega$ if $|\Omega| > N_\text{option}$
   \EndFor
\end{algorithmic}
\end{algorithm}
\end{figure}

\begin{table}[t]
  \caption{Hyperparameter search for eigenoption discovery.}
  \label{table:eigenoption_discovery_hyperparameters}
  \centering
  \rowcolors{2}{white}{tablegray}
  \begin{tabular}{lll}
    \toprule
    \textbf{Name}     & \textbf{Description} & \textbf{Values} \\
    \midrule
    $p_\text{option}$ & Probability of selecting an option  & $[0.01, 0.05, 0.1]$    \\
    $N_\text{learn}$     & Number of iterations through the collected dataset to learn the SR/DR & $[1, 10, 100]$      \\
    $\alpha$  & Step size for learning the SR/DR       & $[0.01, 0.03, 0.1]$  \\
    $N_\text{option}$ &  Number of latest options to keep & $[1, 8, 1000]$ \\
    \bottomrule
  \end{tabular}
\end{table}

In Figure~\ref{fig:eigoption_discovery_scatter}, we used solid dots to highlight the hyperparameter settings that lead to the highest state visitation percentage. We show the learning curves for these hyperparameter settings in Figure~\ref{fig:eigenoption_discovery_curve}. We can see that RACE maintains a similar state visitation percentage as CEO, but obtains much higher rewards when exploring the environments. This is because RACE, being reward-aware, avoids low-reward regions when exploring the environments. The random walk, on the other hand, struggles to explore the state space, especially for the larger environments like grid room and grid maze, and does not encounter low-reward regions, causing it to obtain even higher cumulative rewards than RACE. \looseness=-1
\begin{figure}
    \centering
    \includegraphics[width=\linewidth]{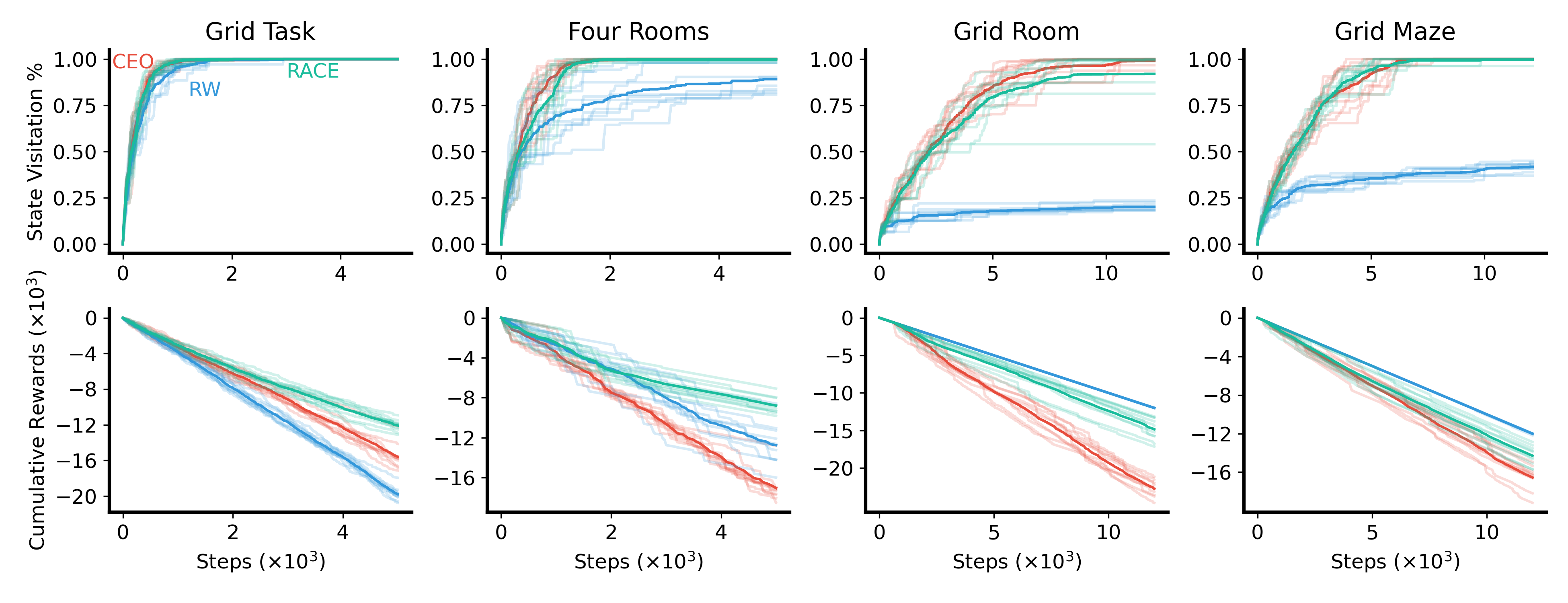}
    \caption{The state visitation percentage (top) and cumulative rewards (bottom) for the highlighted hyperparameter settings in Figure~\ref{fig:eigoption_discovery_scatter} of iterative online eigenoption discovery using the SR (CEO) and the DR (RACE), and the random walk (RW). The solid line shows the average over 10 seeds, while the individual seeds are shown in lighter shade.}
    \label{fig:eigenoption_discovery_curve}
\end{figure}

Figure~\ref{fig:eigenoption_discovery_without_lava_curve} shows the state visitation percentage for CEO, RACE, and the random walk in the variations of the environments in Figure~\ref{fig:environments} without any low-reward regions. As the reward function is constant, the top eigenvectors of the SR and the DR are guaranteed by Theorem~\ref{theorem:SR_DR_equiv_main} to be identical. It thus does not come as a surprise that we observe very similar state visitation percentages for CEO and RACE. 
\begin{figure}
    \centering
    \includegraphics[width=\linewidth]{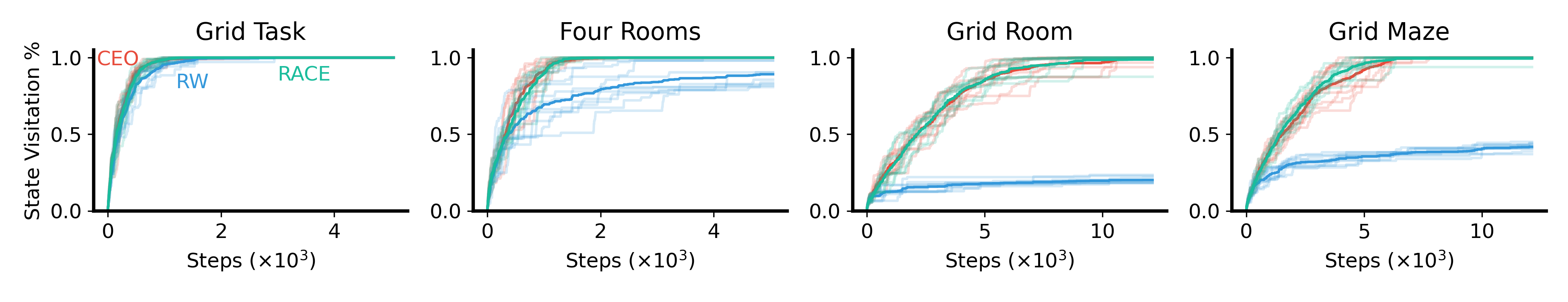}
    \caption{State visitation percentage for CEO, RACE, and the random walk (RW) in the environments shown in Figure~\ref{fig:environments} without low-reward regions. Shown in lighter shade are the 10 individual seeds. \looseness=-1}
    \label{fig:eigenoption_discovery_without_lava_curve}
\end{figure}

We now describe details on combining iterative online eigenoption discovery with Q-learning~\citep{watkins1989learning}.
Specifically, we use iterative online eigenoption discovery to collect transition data, and then perform offline Q-learning using the collected data. We compare RACE with Q-learning (RACE+Q) with CEO with Q-learning (CEO+Q), and a Q-learning baseline (QL). For the Q-learning baseline, at every iteration, we use the $\epsilon$-greedy policy induced by the current Q-values to collect transitions, and then update the Q-values using the collected transitions.

For RACE+Q and CEO+Q, we follow the same hyperparameter search procedure as before (see Table~\ref{table:eigenoption_discovery_hyperparameters}). 
For the Q-learning baseline, we perform a grid search over the Q-value initialization ($[-1000, -100, -10, 0]$), $\epsilon$ for $\epsilon$-greedy exploration ($[0.01, 0.05, 0.1, 0.15, 0.2]$), and step size ($[0.01, 0.03, 0.1, 0.3, 1]$). We run 10 seeds for each hyperparameter setting, and after identifying the best hyperparameters, re-run 50 seeds to avoid maximization bias.
As we do not see much performance difference in simpler environments, we introduce two new environments that are larger versions of Grid Room (Grid Room (L)) and Grid Maze (Grid Maze (L)), shown in Figure~\ref{fig:env_larger}. 

\begin{figure}
    \centering
    \includegraphics[width=\linewidth]{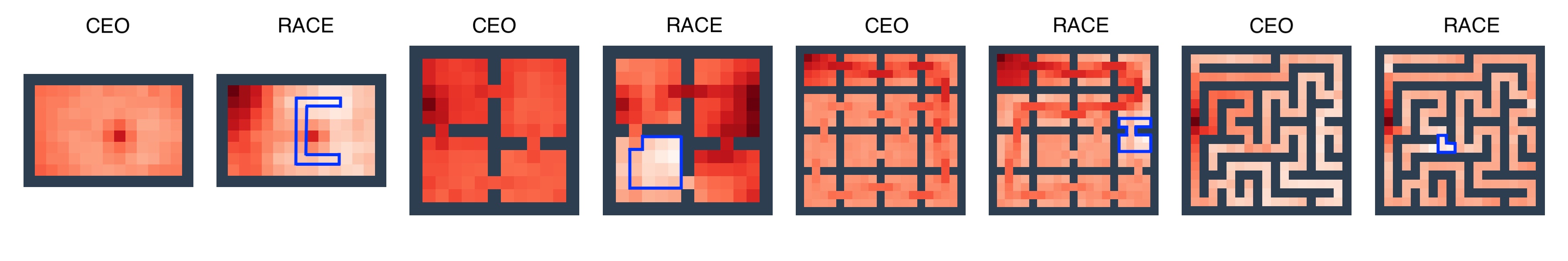}
    \caption{Cumulative state visits for CEO and RACE in the environments from Figure~\ref{fig:environments} averaged over 10 seeds, where darker red indicates more visits. The low-reward regions are enclosed in blue. Although CEO is also applied to the problem with low-reward regions, we emphasized them only for RACE because CEO does not see it. As indicated by the lighter red, RACE visits low-reward regions much less than CEO. It is especially clear in grid room (third env. from the left) that RACE takes detours to visit the bottom rooms without passing through low-reward regions.}
    \label{fig:eigenoption_discovery_cumulative_visit}
\end{figure}

\begin{figure}
    \centering
    \includegraphics[width=0.5\linewidth]{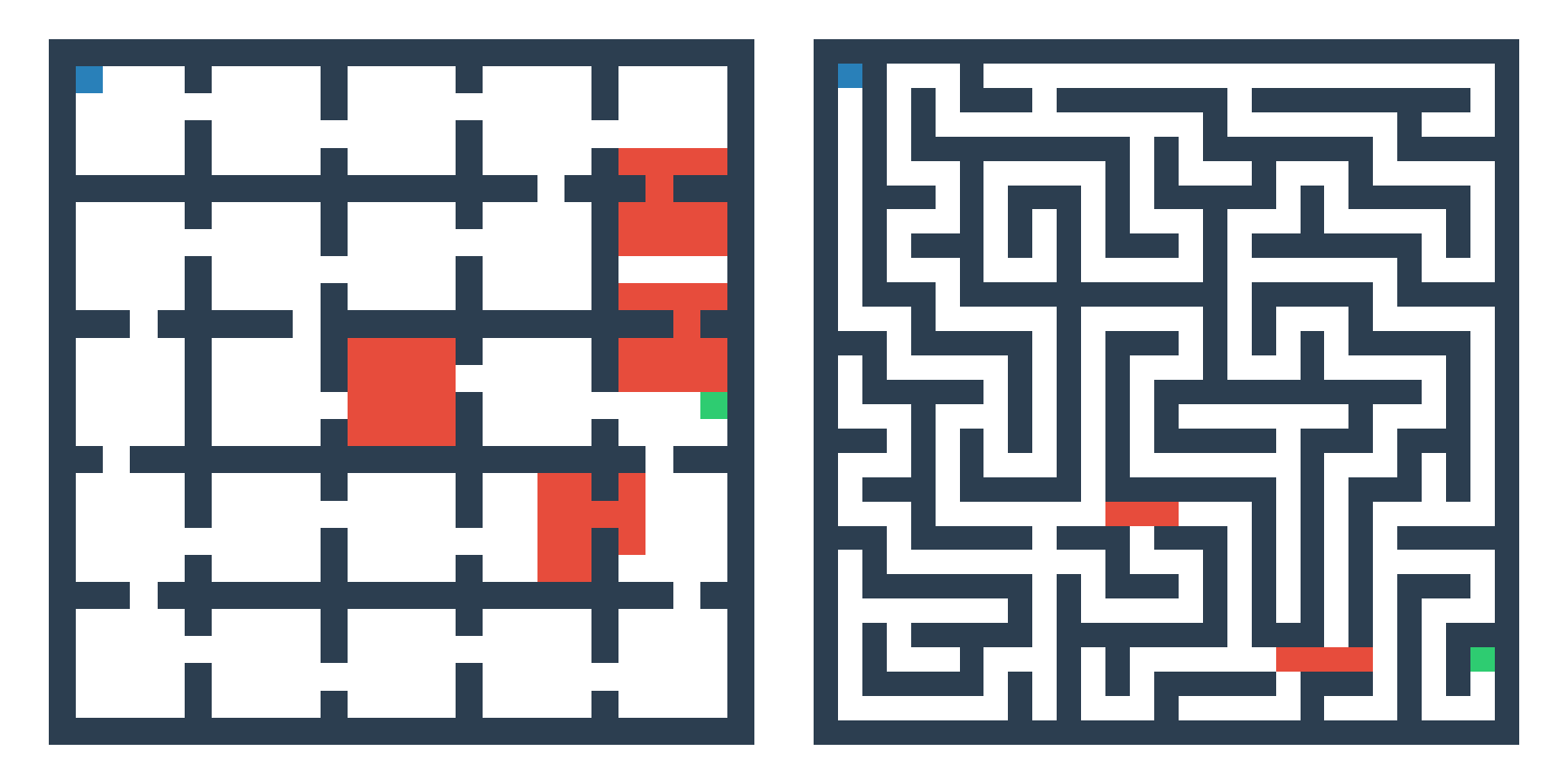}
    \caption{Left: Larger version of Grid Room (Grid Room (L)); Right: Larger version of Grid Maze (Grid Maze (L)), adapted from prior work~\citep{wang2023reachability}.}
    \label{fig:env_larger}
\end{figure}

\newpage
\subsection{Count-Based Exploration}
\label{appendix:count-based-exploration}
To facilitate the learning and application of the DR for exploration in the Riverswim and Sixarms environments, we rescale the environment rewards to lie within the range $[-1, 0]$. 
Importantly, this rescaling is applied solely for learning the DR; the Sarsa agent continues to operate using the original, unscaled rewards for action-value estimation. Note that we use the DR defined for state-action-dependent rewards, $\bar{\mbf \Zeta}$.

For Sarsa and Sarsa+SR, we adopt the code (MIT license) and hyperparameters specified by the original authors~\citep{machado2020count}. It is to be noted that while they use $r_{\text{intr}}(s) = \beta \cdot \frac{1}{\| \mbf \Psi^\pi_{s,:} \|_1} $ as the intrinsic reward~\citep{machado2020count}, we use $r_{\text{intr}}(s, a) = \beta \cdot \log( \| \bar{\mbf \Zeta}_{sa,:} \|_2)$. We empirically find that the logarithmic transformation enhances performance when leveraging the DR for exploration.

For Sarsa+DR, we sweep over different values of $\eta, \alpha, \beta, \lambda$, with $\eta \in \{ 0.01,0.1,0.25,0.5\}$, $\alpha \in \{ 0.01,0.1,0.25,0.5\}$, $\beta \in \{ 0.1, 1, 10, 100\}$ and $\lambda \in \{1, 1.5, 2\}$. Here, $\eta$ and $\alpha$ denote the step sizes for updating the Q-values and the DR, respectively. $\beta$ is the scaling factor for the intrinsic reward and $\lambda$ controls the {relative importance of the deviation cost}. We use $\epsilon=0.01$ for $\epsilon$-greedy exploration. Additionally, we experimented with different transformations of the DR for computing the intrinsic reward, specifically, $\operatorname{transform}(\mbf x) \in \{ \| \mbf x \|_1, \| \mbf x \|_2, \log(\| \mbf x\|_1), \log(\|\mbf x\|_2) \}$, where $\mbf x$ is a row of $\bar{\mbf \Zeta}$. The best hyperparameters, shown in Table~\ref{table:count_based_hypers}, were evaluated on 100 independent runs.

\begin{table}
\centering
\caption{Hyperparameters for Count-Based Exploration (Sarsa + DR).}
\label{table:count_based_hypers}
\small
\rowcolors{2}{white}{tablegray}
\begin{tabular}{@{}lccccc@{}}
\toprule
\textbf{Environment} & $\alpha$ & $\eta$ & $\beta$ & $\lambda$ & $\operatorname{{transform}}(\mbf x)$\\
\midrule
\textsc{RiverSwim} & 0.5 & 0.25 & 100 & 1 & $\log(\|\mbf x\|_2)$\\
\textsc{SixArms}   & 0.5 & 0.01 & 0.1 & 1.5 & $\log(\|\mbf x\|_2)$  \\
\bottomrule
\end{tabular}
\end{table}

\subsection{Transfer}
\label{appendix:transfer}
We describe the details of the features used for transfer learning. We assume that the agent has access to features that can perfectly represent terminal rewards for the DFs, and features that can perfectly represent the reward function for the SFs. For the DR, as there are four terminal states in the environment, and we assume the agent receives the same reward at a terminal state regardless of which action is chosen, we represent the features as a 4-dimensional one-hot vector, where each entry in the vector corresponds to one terminal state.

For the SR, we have to represent the reward function for all states. There are three types of states in the environment: empty state (indicated by white tile), low-reward state (indicated by red tile), and goal state (indicated by green tile). All empty states have the same reward of $-1$, and all low-reward states have the same reward of $-20$. Therefore, we represent state features as a 6-dimensional one-hot vector, where one entry is activated when the state is an empty state, one entry is activated when the state is a low-reward state, and the remaining four entries correspond to the four terminal states. \looseness=-1

Given the features, we compute the DFs and the SFs under randomly sampled configurations of terminal state rewards, where the reward at each terminal state is sampled independently from a normal distribution with 0 mean and 50 standard deviation. 
We learn the DFs by following the default policy. While we only need to learn the DFs under the default policy, we can learn the SFs with respect to a set of policies learned under different reward functions. The better the set of reward functions covers the space of reward functions, the better the transfer policy computed using the SFs. We consider the number of reward functions to be 1, 2, 4, and 8. For each randomly sampled reward function, we compute the SFs with respect to the optimal policy learned under this sampled reward function. \looseness=-1

In this work, we ensure that the DFs and the SFs are well learned by allowing the agent to interact with the environment for a large number of steps. For the DFs, the agent interacts with the environment while following the default policy for 100K steps. For the SFs, we first use Q-learning~\citep{watkins1989learning} to learn an optimal policy for 100K steps. We then learn the SFs while following the optimal policy for another 100K steps. We use $\lambda=1.3$ and the uniform random policy as the default policy for the DFs, $\gamma=0.99$ for the SFs, and use a step size of $0.1$ for both approaches. We perform 50 independent runs. \looseness=-1

Note that the DFs computes an optimal policy in the linearly solvable MDP setting that balances the reward function and the cost of deviating from the default policy. The optimal policies computed by the DFs are then softly biased towards the default policy. We mitigate this bias by approximating the optimal transfer policies (see Eq.~\ref{eq:DR_optimal_policy}) using deterministic policies greedy over the optimal Q-values. \looseness=-1

\section{Statistical Significance}
\label{appendix:statistical_significance}
We report 95\% confidence intervals or directly visualize all independent runs if possible. The 95\% confidence intervals capture randomness across independent runs, where the primary source of randomness is random exploration, e.g., $\epsilon$-greedy exploration. In transfer learning experiments, randomness also arises from sampling terminal state reward configurations. As we perform a large number of independent runs in our experiments ($N \geq 50$), we assume the sampling distribution is normal and compute the 95\% confidence interval as $\pm 1.96 \frac{\hat\sigma}{\sqrt{N}}$, where $\hat\sigma$ is the sample standard deviation. \looseness=-1

\section{Compute Resources}
\label{appendix:compute_resources}
We use CPUs for all of our experiments. 
We describe the runtimes for experiments involving the DR. For reward-shaping experiments, each independent run takes under 10 minutes. For eigenoption discovery experiments, each independent run takes less than 2 hours in grid task and four rooms, and takes around 10 hours in grid room and grid maze. For count-based exploration experiments, each independent run takes less than one minute. For transfer experiments, each independent run takes less than 5 minutes. Due to the large number of independent runs performed for hyperparameter search and preliminary experiments, we estimate the total compute used for the project to be 10.5 CPU core years. \looseness=-1


\end{document}